\documentclass[letterpaper,final]{article}
\usepackage{aaai}
\usepackage{times}
\usepackage[scaled]{helvet}
\usepackage{courier}
\setlength{\pdfpagewidth}{8.5in} 
\setlength{\pdfpageheight}{11in}
\usepackage{amssymb}
\usepackage{amsthm}
\usepackage{amsmath}
\usepackage[final]{graphicx}
\usepackage{xspace}
\usepackage{paralist}
\usepackage{multirow}
\usepackage[draft]{ifdraft}
\usepackage{bussproofs}
\usepackage[table]{xcolor}
\usepackage{thmtools}
\usepackage{thm-restate}
\usepackage{url}
\urlstyle{rm}
\usepackage{macros}
\allowdisplaybreaks

\frenchspacing
\pdfinfo{
/Title (Introducing Nominals to the Combined Query Answering Approaches for EL)
/Author (Giorgio Stefanoni, Boris Motik, Ian Horrocks)
}

\title{Introducing Nominals to the Combined Query Answering Approaches for $\mathcal{EL}$}
\author{Giorgio Stefanoni \and Boris Motik \and Ian Horrocks\\
Department of Computer Science, University of Oxford\\
Wolfson Building, Parks Road,\\
Oxford, OX1 3QD, UK}

\begin{document}

\maketitle

\begin{abstract}
So-called \emph{combined approaches} answer a conjunctive query over a
description logic ontology in three steps: first, they materialise certain
consequences of the ontology and the data; second, they evaluate the query
over the data; and third, they filter the result of the second phase to
eliminate unsound answers. Such approaches were developed for various members
of the DL-Lite and the $\mathcal{EL}$ families of languages, but none of them
can handle ontologies containing nominals. In our work, we bridge this gap and
present a combined query answering approach for \elho---a logic that contains
all features of the OWL 2 EL standard apart from transitive roles and complex
role inclusions. This extension is nontrivial because nominals require
equality reasoning, which introduces complexity into the first and the third
step. Our empirical evaluation suggests that our technique is suitable for
practical application, and so it provides a practical basis for conjunctive
query answering in a large fragment of OWL 2 EL.
\end{abstract}

\section{Introduction}

Description logics (DLs) \cite{DLHB-2007-ed} are a family of knowledge
representation formalisms that underpin OWL 2 \cite{ghmppss08next-steps}---an
ontology language used in advanced information systems with many practical
applications. Answering conjunctive queries (CQs) over ontology-enriched data
sets is a core reasoning service in such systems, so the computational aspects
of this problem have received a lot of interest lately. For expressive DLs,
the problem is at least doubly exponential in query size \cite{GHLS08a}. The
problem, however, becomes easier for the $\mathcal{EL}$ \cite{babl05} and the
DL-Lite \cite{Calvanese2007} families of DLs, which provide the foundation for
the OWL 2 EL and the OWL 2 QL profiles of OWL 2. An important goal of this
research was to devise not only \emph{worst-case optimal}, but also
\emph{practical} algorithms. The known approaches can be broadly classified as
follows.

The first group consists of automata-based approaches for DLs such as OWL 2 EL
\cite{KRH:elcq07} and Horn-$\mathcal{SHOIQ}$ and Horn-$\mathcal{SROIQ}$
\cite{DBLP:conf/ijcai/OrtizRS11}. While worst-case optimal, these approaches
are typically not suitable for practice since their best-case and worst-case
performance often coincide.

The second group consists of rewriting-based approaches. Roughly speaking,
these approaches rewrite the ontology and/or the query into another formalism,
typically a union of conjunctive queries or a datalog program; the relevant
answers can then be obtained by evaluating the rewriting over the data.
Rewriting-based approaches were developed for members of the DL-Lite family
\cite{Calvanese2007,ACKZ09}, and the DLs $\mathcal{ELHIO}_\bot$
\cite{DBLP:journals/japll/Perez-UrbinaMH10} and Horn-$\mathcal{SHIQ}$
\cite{DBLP:conf/aaai/EiterOSTX12}, to name just a few. A common problem,
however, is that rewritings can be exponential in the ontology and/or query
size. Although this is often not a problem in practice, such approaches are
not worst-case optimal. An exception is the algorithm by
\citeA{DBLP:conf/dlog/Rosati07} that rewrites an $\mathcal{ELH}_\bot$ ontology
into a datalog program of polynomial size; however, the algorithm also uses a
nondeterministic step to transform the CQ into a tree-shaped one, and it is
not clear how to implement this step in a goal-directed manner.

The third group consists of \emph{combined approaches}, which use a three-step
process: first, they augment the data with certain consequences of the
ontology; second, they evaluate the CQ over the augmented data; and third,
they filter the result of the second phase to eliminate unsound answers. The
third step is necessary because, to ensure termination, the first step is
unsound and may introduce facts that do not follow from the ontology; however,
this is done in a way that makes the third step feasible. Such approaches have
been developed for logics in the DL-Lite \cite{KoLuToWoZa-IJCAI11} and the
$\mathcal{EL}$ \cite{DBLP:conf/ijcai/LutzTW09} families, and they are
appealing because they are worst-case optimal and practical: only the second
step is intractable (in query size), but it can be solved using well-known
database techniques.

None of the combined approaches proposed thus far, however, handles
\emph{nominals}---concepts containing precisely one individual. Nominals are
included in OWL 2 EL, and they are often used to state that all instances of a
class have a certain property value, such as `the sex of all men is male', or
`each German city is located in Germany'. In this paper we present a combined
approach for \elho---the DL that covers all features of OWL 2 EL apart from
transitive roles and complex role inclusions. To the best of our knowledge,
this is the first combined approach that handles nominals. Our extension is
nontrivial because nominals require equality reasoning, which increases the
complexity of the first and the third step of the algorithm. In particular,
nominals may introduce recursive dependencies in the filtering conditions used
in the third phase; this is in contrast to the known combined approach for
$\mathcal{EL}$ \cite{DBLP:conf/ijcai/LutzTW09} in which filtering conditions
are not recursive and can be incorporated into the input query. To solve this
problem, our algorithm evaluates the original CQ and then uses a polynomial
function to check the relevant conditions for each answer.

Following \citeA{KRH:ELP-08}, instead of directly materialising the relevant
consequences of the ontology and the data, we transform the ontology into a
datalog program that captures the relevant consequences. Although seemingly
just a stylistic issue, a datalog-based specification may be beneficial in
practice: one can either materialise all consequences of the program bottom-up
in advance, or one can use a top-down technique to compute only the
consequences relevant for the query at hand. The latter can be particularly
useful in informations systems that have read-only access to the data, or
where data changes frequently.

We have implemented a prototypical system using our algorithm, and we carried
out a preliminary empirical evaluation of (\emph{i})~the blowup in the number
of facts introduced by the datalog program, and (\emph{ii})~the number of
unsound answers obtained in the second phase. Our experiments show both of
these numbers to be manageable in typical cases, suggesting that our algorithm
provides a practical basis for answering CQs in an expressive fragment of OWL
2 EL.

The proofs of our technical results are provided in \ifdraft{this paper's
appendix}{the technical report \cite{DBLP:journals/corr/abs-1303-7430}}.

\section{Preliminaries}

\textbf{Logic Programming.}\;\;
We use the standard notions of variables, constants, function symbols, terms,
atoms, formulas, and sentences~\cite{Fitting:1996:FLA:230183}. We often
identify a conjunction with the set of its conjuncts. A substitution $\sigma$
is a partial mapping of variables to terms; $\dom{\sigma}$ and $\rng{\sigma}$
are the domain and the range of $\sigma$, respectively; $\sproj{\sigma}{S}$ is
the restriction of $\sigma$ to a set of variables $S$; and, for $\alpha$ a
term or a formula, $\sigma(\alpha)$ is the result of simultaneously replacing
each free variable $x$ occurring in $\alpha$ with $\sigma(x)$. A \emph{Horn
clause} $C$ is an expression of the form ${B_1 \wedge \ldots \wedge B_m
\rightarrow H}$, where $H$ and each $B_i$ are atoms. Such $C$ is a \emph{fact}
if ${m = 0}$, and it is commonly written as $H$. Furthermore, $C$ is
\emph{safe} if each variable occurring in $H$ also occurs in some $B_i$. A
\emph{logic program} $\Sigma$ is a finite set of safe Horn clauses;
furthermore, $\Sigma$ is a \emph{datalog program} if each clause in $\Sigma$
is function-free.

In this paper, we interpret a logic program $\Sigma$ in a model that can be
constructed bottom-up. The \emph{Herbrand universe} of $\Sigma$ is the set of
all terms built from the constants and the function symbols occurring in
$\Sigma$. Given an arbitrary set of facts $B$, let $\Sigma(B)$ be the smallest
superset of $B$ such that, for each clause ${\varphi \rightarrow \psi \in
\Sigma}$ and each substitution $\sigma$ mapping the variables occurring in the
clause to the Herbrand universe of $\Sigma$, if ${\sigma(\varphi) \subseteq
B}$, then ${\sigma(\psi) \subseteq \Sigma(B})$. Let $I_0$ be the set of all
facts occurring in $\Sigma$; for each ${i \in \nat}$, let ${I_{i+1} =
\Sigma(I_i)}$; and let ${I = \bigcup_{i \in \nat} I_i}$. Then $I$ is the
\emph{minimal Herbrand model} of $\Sigma$, and it is well known that $I$
satisfies ${\forall \vec{x}. C}$ for each Horn clause ${C \in \Sigma}$ and
$\vec{x}$ the vector of all variables occurring in $C$.

In this paper we allow a logic program $\Sigma$ to contain the equality
predicate $\approx$. In first-order logic, $\approx$ is usually interpreted as
the identity over the interpretation domain; however, $\approx$ can also be
explicitly axiomatised \cite{Fitting:1996:FLA:230183}. Let $\Sigma_\approx$ be
the set containing clauses \eqref{eq:ref}--\eqref{eq:tra}, an instance of
clause \eqref{eq:prep} for each $n$-ary predicate $R$ occurring in $\Sigma$
and each ${1 \leq i \leq n}$, and an instance of clause \eqref{eq:frep} for
each $n$-ary function symbol $f$ occurring in $\Sigma$ and each ${1 \leq i
\leq n}$.
{\footnotesize{    
\begin{align}
    \label{eq:ref}                                          & \rightarrow x \approx x \\
    \label{eq:sym}  x_1 \approx x_2                         & \rightarrow x_2 \approx x_1 \\
    \label{eq:tra}  x_1 \approx x_2 \wedge x_2 \approx x_3  & \rightarrow x_1 \approx x_3 \\
    \label{eq:prep} R(\vec{x})\wedge x_i \approx x_i'       & \rightarrow R(x_1,\ldots,x_i',\ldots,x_n) \\
    \label{eq:frep} x_i \approx x_i'                        & \rightarrow f(\ldots,x_i,\ldots) \approx  f(\ldots,x_i',\ldots)
\end{align}
}}
The minimal Herbrand model of a logic program $\Sigma$ that contains $\approx$
is the minimal Herbrand model of ${\Sigma \cup \Sigma_\approx}$.

\textbf{Conjunctive Queries.}\;\;
A \emph{conjunctive query} (CQ) is a formula ${q = \exists \vec{y}.
\psi(\vec{x},\vec{y})}$ with $\psi$ a conjunction of function-free atoms over
variables ${\vec{x} \cup \vec{y}}$. Variables $\vec{x}$ are the \emph{answer
variables} of $q$. Let $\terms{q}$ be the set of terms occurring in $q$.

Let $\tau$ be a substitution such that $\rng{\tau}$ contains only constants.
Then, ${\tau(q) = \exists \vec{z}. \tau(\psi)}$, where $\vec{z}$ is obtained
from $\vec{y}$ by removing each variable ${y \in \vec{y}}$ for which $\tau(y)$
is defined. Note that, according to this definition, non-free variables can
also be replaced; for example, given ${q = \exists y_1,y_2. R(y_1,y_2)}$ and
${\tau = \setof{y_2 \mapsto a}}$, we have ${\tau(q) = \exists y_1. R(y_1,a)}$.

Let $\Sigma$ be a logic program, let $I$ be the minimal Herbrand model of
$\Sigma$, and let ${q = \exists \vec{y}. \psi(\vec{x},\vec{y})}$ be a CQ that
uses only the predicates occurring in $\Sigma$. A substitution $\pi$ is a
\emph{candidate answer} for $q$ in $\Sigma$ if ${\dom{\pi} = \vec{x}}$ and
$\rng{\pi}$ contains only constants; furthermore, such a $\pi$ is a
\emph{certain answer} to $q$ over $\Sigma$, written ${\Sigma \models \pi(q)}$,
if a substitution $\tau$ exists such that ${\dom{\tau} = \vec{x} \cup
\vec{y}}$, ${\pi = \sproj{\tau}{\vec{x}}}$, and ${\tau(q) \subseteq I}$.

\begin{table}[tb]
    \centering
    \footnotesize
    \begin{tabular}{r|lcr}
        \textbf{Type}       & \textbf{Axiom}                            &                               & \textbf{Clause} \\
        \hline
        1                   & $\setof{a} \ISA A$                        & $\leadsto$                    & ${A}(a)$ \\[0.7ex]
        2                   & $A \ISA B$                                & $\leadsto$                    & ${A}(x) \rightarrow B(x)$ \\[0.7ex]
        3                   & $A \ISA \setof{a}$                        & $\leadsto$                    & ${A}(x) \rightarrow x \approx a$ \\[0.7ex]
        4                   & $A_1 \sqcap A_2 \ISA A$                   & $\leadsto$                    & ${A_1}(x) \wedge {A_2}(x) \rightarrow A(x)$ \\[0.7ex]
        5                   & $\SOME{R}{A_1} \ISA A$                    & $\leadsto$                    & $R(x,y) \wedge {A_1}(y) \rightarrow {A}(x)$ \\[0.7ex]
        \multirow{2}{*}{6}  &\multirow{2}{*}{$A_1 \ISA \SOME{R}{A}$}    &\multirow{2}{*}{$\leadsto$}    & ${A_1}(x) \rightarrow R(x,f_{R,A}(x))$ \\
                            &                                           &                               & ${A_1}(x) \rightarrow {A}(f_{R,A}(x))$ \\[0.7ex]
        7                   & $R \ISA S$                                & $\leadsto$                    & $R(x,y) \rightarrow S(x,y)$ \\[0.7ex]
        8                   & $\range{R}{A}$                            & $\leadsto$                    & $R(x,y) \rightarrow A(y)$ \\
        \hline
    \end{tabular}
    \setlength{\abovecaptionskip}{5pt}
    \setlength{\belowcaptionskip}{-15pt}
    \caption{Transforming \elho Axioms into Horn Clauses}\label{table:Xi}
\end{table}

\textbf{Description Logic.}\;\;
DL \elho is defined w.r.t.\ a signature consisting of mutually disjoint and
countably infinite sets \conceptnames, \rolenames, and \indnames of
\emph{atomic concepts} (i.e., unary predicates), \emph{roles} (i.e., binary
predicates), and \emph{individuals} (i.e., constants), respectively.
Furthermore, for each individual $a\in\indnames$, expression $\setof{a}$
denotes a \emph{nominal}---that is, a concept containing precisely the
individual $a$. Also, we assume that $\top$ and $\bot$ are unary predicates
(without any predefined meaning) not occurring in $\conceptnames$. We consider
only \emph{normalised} knowledge bases, as it is well known \cite{babl05} that
each \elho knowledge base can be normalised in polynomial time without
affecting the answers to CQs. An \elho \emph{TBox} is a finite set of axioms
of the form shown in the left-hand side of Table \ref{table:Xi}, where
${A_{(i)} \in \conceptnames \cup \setof{\top}}$, ${B \in \conceptnames \cup
\setof{\top,\bot}}$, ${R,S \in \rolenames}$, and ${a \in \indnames}$. An
\emph{ABox} \A is a finite set of facts constructed using the symbols from
${\conceptnames \cup \setof{\top, \bot}}$, $\rolenames$, and $\indnames$.
Finally, an \elho \emph{knowledge base} (KB) is a tuple ${\K =
\tuple{\T,\A}}$, where \T is an \elho TBox \T and an \A is an ABox such that
each predicate occurring in \A also occurs in \T.

We interpret \K as a logic program. Table \ref{table:Xi} shows how to
translate a TBox \T into a logic program $\Xi(\T)$. Moreover, let $\top(\T)$
be the set of the following clauses instantiated for each atomic concept $A$
and each role $R$ occurring in \T.
\begin{displaymath}
    A(x) \rightarrow \top(x)    \quad   R(x,y) \rightarrow \top(x)  \quad   R(x,y) \rightarrow \top(y)
\end{displaymath}
A knowledge base ${\K = \tuple{\T,\A}}$ is translated into the logic program
${\Xi(\K) = \Xi(\T) \cup \top(\T) \cup \A}$. Then, \K is \emph{unsatisfiable}
if ${\Xi(\K) \models \exists y. \bot(y)}$. Furthermore, given a conjunctive
query $q$ and a candidate answer $\pi$ for $q$, we write ${\K \models \pi(q)}$
iff \K is unsatisfiable or ${\Xi(\K) \models \pi(q)}$. Although somewhat
nonstandard, our definitions of DLs are equivalent to the ones based on the
standard denotational semantics \cite{DLHB-2007-ed}. Given a candidate answer
$\pi$ for $q$, deciding whether ${\Xi(\K) \models \pi(q)}$ holds is
\np-complete in \emph{combined complexity}, and \ptime-complete in \emph{data
complexity} \cite{KRH:elcq07}.

\section{Datalog Rewriting of \elho TBoxes}

For the rest of this section, we fix an arbitrary \elho knowledge base ${\K =
\tuple{\T,\A}}$. We next show how to transform \K into a datalog program
$\dat(\K)$ that can be used to check the satisfiability of \K. In the
following section, we then show how to use $\dat(\K)$ to answer conjunctive
queries.

Due to axioms of type 6 (cf.\ Table \ref{table:Xi}), $\Xi(\K)$ may contain
function symbols and is generally not a datalog program; thus, the evaluation
of $\Xi(\K)$ may not terminate. To ensure termination, we eliminate function
symbols from $\Xi(\K)$ using the technique by \citeA{KRH:ELP-08}: for each ${A
\in \conceptnames \cup \setof{\top}}$ and each ${R \in \rolenames}$ occurring
in \T, we introduce a globally fresh and unique \emph{auxiliary individual}
$o_{R,A}$. Intuitively, $o_{R,A}$ represents all terms in the Herbrand
universe of $\Xi(\K)$ needed to satisfy the existential concept $\SOME{R}{A}$.
\citeA{KRH:ELP-08} used this technique to facilitate taxonomic reasoning,
while we use it to obtain a practical CQ answering algorithm. Please note that
$o_{R,A}$ depends on both $R$ and $A$, whereas in the known approaches such
individuals depend only on $A$ \cite{DBLP:conf/ijcai/LutzTW09} or $R$
\cite{KoLuToWoZa-IJCAI11}.

\begin{definition}
    Datalog program $\dat(\T)$ is obtained by translating each axiom of type
    other than 6 in the TBox $\T$ of $\K$ into a clause as shown in Table
    \ref{table:Xi}, and by translating each axiom ${A_1 \ISA \SOME{R}{A}}$ in
    $\T$ into clauses ${A_1(x) \rightarrow R(x,o_{R,A})}$ and ${A_1(x)
    \rightarrow A(o_{R,A})}$. Furthermore, the translation of \K into datalog
    is given by ${\dat(\K) = \dat(\T) \cup \top(\T) \cup \A}$.
\end{definition}

\begin{example}\label{example:translation}
Let $\T$ be the following \elho TBox:
\begin{displaymath}
\renewcommand{\arraystretch}{1.1}
\begin{array}{r@{\;}lr@{\;}l}
    \mathit{KRC}               & \ISA \SOME{\mathit{taught}}{\mathit{JProf}}   & \SOME{\mathit{taught}}{\top}    & \ISA \mathit{Course}  \\
    \mathit{Course}            & \ISA \SOME{\mathit{taught}}{\mathit{Prof}}    & \setof{\mathit{kr}}              & \ISA \mathit{KRC} \\
    \mathit{Prof}              & \ISA \SOME{\mathit{advisor}}{\mathit{Prof}}   & \mathit{KRC}                     & \ISA \mathit{Course} \\
    \mathit{JProf}             & \ISA \setof{\mathit{john}}                    & \multicolumn{2}{@{}l@{}}{$\range{\mathit{taught}}{\mathit{Prof}}$} \\
\end{array}
\end{displaymath}
Then, $\dat(\T)$ contains the following clauses:
\begin{displaymath}
\renewcommand{\arraystretch}{1.1}
\setlength{\arraycolsep}{2pt}
\begin{array}{@{}l@{\;\;}l@{}}
    \mathit{KRC}(x) \rightarrow \mathit{taught}(x,o_{T,J})      & \mathit{JProf}(x) \rightarrow x \approx \mathit{john}\\
    \mathit{KRC}(x) \rightarrow \mathit{JProf}(o_{T,J})         & \mathit{taught}(x,y) \rightarrow \mathit{Course}(x) \\
    \mathit{Course}(x) \rightarrow \mathit{taught}(x,o_{T,P})   & \mathit{KRC}(kr) \\
    \mathit{Course}(x) \rightarrow \mathit{Prof}(o_{T,P})       & \mathit{KRC}(x)  \rightarrow \mathit{Course}(x) \\
    \mathit{Prof}(x) \rightarrow \mathit{advisor}(x,o_{A,P})    & \mathit{taught}(x,y) \rightarrow \mathit{Prof}(y)\\
    \mathit{Prof}(x) \rightarrow \mathit{Prof}(o_{A,P})         & \phantom{;}\qquad\qquad\qquad\qquad\quad\;\;\fulld \\
\end{array}
\end{displaymath}
\end{example}

\begin{figure}[t]
    \centering
    \includegraphics[scale=0.6]{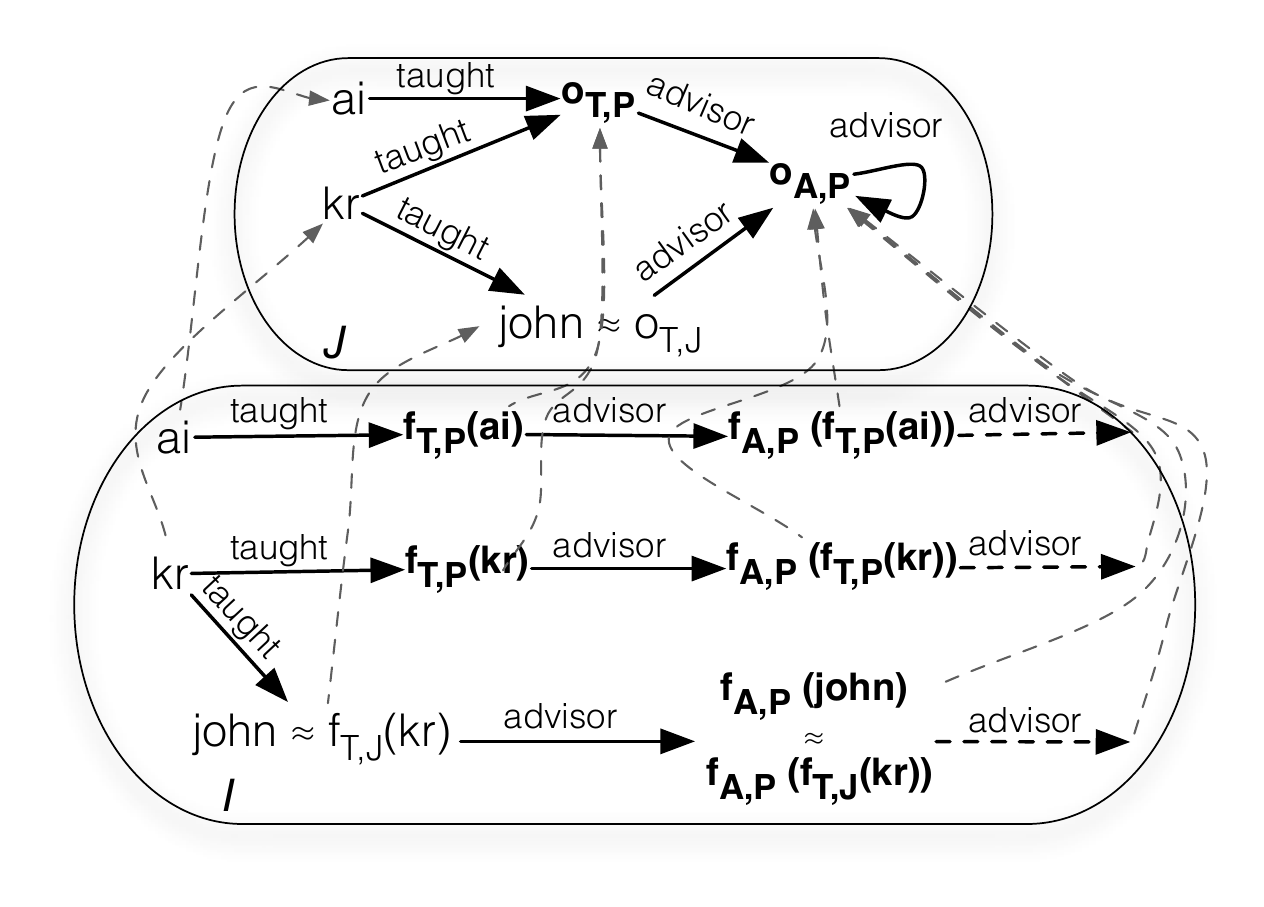}
    \setlength{\abovecaptionskip}{5pt}
    \setlength{\belowcaptionskip}{-10pt}
    \caption{Representing the Models of $\Xi(\K)$. }\label{figure:model}
\end{figure}

The following result straightforwardly follows from the definition of
$\Xi(\K)$ and $\dat(\K)$.

\begin{restatable}{proposition}{propprogsize}\label{prop:prog-size}
    Program $\dat(\K)$ can be computed in time linear in the size of \K.
\end{restatable}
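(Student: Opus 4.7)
The plan is to inspect each of the three components of $\dat(\K) = \dat(\T) \cup \top(\T) \cup \A$ in turn and bound both its size and the time required to construct it linearly in $|\K|$.

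First, I would handle $\dat(\T)$ by a single pass over the axioms of $\T$. By Table \ref{table:Xi}, every axiom of type other than 6 is rewritten into exactly one Horn clause of constant size, and each axiom of type 6 of the form $A_1 \ISA \SOME{R}{A}$ yields exactly two clauses, each of constant size, mentioning the auxiliary individual $o_{R,A}$. Since the naming scheme $o_{R,A}$ depends only on the pair $(R,A)$ already present in the axiom, the fresh individual can be looked up or created in constant time using any standard hash-based symbol table. Hence each axiom contributes $O(1)$ to both output size and running time, giving $|\dat(\T)| = O(|\T|)$ and construction time $O(|\T|)$.

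Next, for $\top(\T)$, I would enumerate the atomic concepts and roles occurring in $\T$, which can be collected during the same pass over $\T$. For each atomic concept $A$ we emit one clause $A(x) \rightarrow \top(x)$, and for each role $R$ we emit the two clauses $R(x,y) \rightarrow \top(x)$ and $R(x,y) \rightarrow \top(y)$. The total number of such symbols is bounded by $|\T|$ and each clause has constant size, so $\top(\T)$ is built in time $O(|\T|)$. Finally, $\A$ is included verbatim in $\dat(\K)$ and thus contributes $O(|\A|)$. Summing the three bounds yields total time $O(|\T| + |\A|) = O(|\K|)$.

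There is no real technical obstacle here; the only thing that needs care is justifying that auxiliary individuals $o_{R,A}$ and the $\top$-clause templates do not blow up the output — but the observation that each $(R,A)$ pair is drawn directly from an axiom of $\T$, and that $\top$ is emitted once per symbol rather than once per axiom, makes this immediate.
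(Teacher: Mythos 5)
Your argument is correct and matches the paper, which offers no separate proof and simply notes that the claim follows straightforwardly from the definitions of $\Xi(\K)$ and $\dat(\K)$: each axiom yields one (or, for type 6, two) constant-size clauses, $\top(\T)$ adds a bounded number of clauses per symbol of $\T$, and $\A$ is copied verbatim. Your write-up is just the explicit version of that observation, so there is nothing to add.
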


Next, we prove that the datalog program $\dat(\K)$ can be used to decide the
satisfiability of $\K$. To this end, we define a function $\delta$ that maps
each term $w$ in the Herbrand universe of $\Xi(\K)$ to the Herbrand universe
of $\dat(\K)$ as follows:
\begin{displaymath}
    \delta(w) = \begin{cases}
                    w       & \text{ if } w \in \indnames, \\
                    o_{R,A} & \text{ if $w$ is of the form } w=f_{R,A}(w'). \\
                \end{cases}
\end{displaymath}
Let \lpmodel and \datalogmodel be the minimal Herbrand models of $\Xi(\K)$ and
$\dat(\K)$, respectively. Mapping $\delta$ establishes a tight relationship
between \lpmodel and \datalogmodel as illustrated in the following example.

\begin{example}\label{example:models}
Let ${\A = \setof{\mathit{Course(ai)}}}$, let $\T$ be as in
Example~\ref{example:translation}, and let ${\K = \tuple{\T,\A}}$. Figure
\ref{figure:model} shows a graphical representation of the minimal Herbrand
models \lpmodel and \datalogmodel of $\Xi(\K)$ and $\dat(\K)$, respectively.
The grey dotted lines show how $\delta$ relates the terms in $\lpmodel$ to the
terms in $\datalogmodel$. For the sake of clarity, Figure \ref{figure:model}
does not show the reflexivity of $\approx$. \fulld
\end{example}

Mapping $\delta$ is a homomorphism from \lpmodel to \datalogmodel.

\begin{restatable}{lemma}{lemmahomomorphism}
    \label{lemma:homomorphism}
    Let \lpmodel and \datalogmodel be the minimal Herbrand models of $\Xi(\K)$
    and $\dat(\K)$, respectively. Mapping $\delta$ satisfies the following
    three properties for all terms $w'$ and $w$, each ${B \in \conceptnames
    \cup \setof{\top,\bot}}$, and each ${R \in \rolenames}$.
    \begin{enumerate}
        \item ${B(w) \in \lpmodel}$ implies ${B(\delta(w)) \in
        \datalogmodel}$.

        \item ${R(w',w) \in \lpmodel}$ implies ${R(\delta(w'),\delta(w)) \in
        \datalogmodel}$.

        \item ${w' \approx w \in \lpmodel}$ implies ${\delta(w') \approx
        \delta(w) \in \datalogmodel}$.
    \end{enumerate}
\end{restatable}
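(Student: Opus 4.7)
\medskip

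\noindent\textbf{Proof plan.}
The natural approach is induction on the stage at which a fact enters the minimal Herbrand model $\lpmodel$ of $\Xi(\K)\cup\Sigma_{\approx}$. Recall that $\lpmodel=\bigcup_{i\in\nat} I_i$ with $I_0$ the facts occurring in $\Xi(\K)$ (which are exactly the ABox facts $\A$) and $I_{i+1}$ obtained by one round of clause application. Properties 1--3 of the lemma will be proved simultaneously for every fact entering $I_i$, by induction on $i$.

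For the base case, every fact in $I_0$ is an ABox assertion and hence involves only constants from $\indnames$, on which $\delta$ is the identity. Since $\A\subseteq\dat(\K)$, all such facts lie in $\datalogmodel$ as well. For the inductive step, I would do a case analysis on the clause in $\Xi(\K)\cup\Sigma_{\approx}$ whose application produced the new fact. The clauses arising from axioms of types 1--5, 7, and 8, together with the $\top(\T)$ clauses, are function-free and appear verbatim in $\dat(\K)$, so by the inductive hypothesis the images under $\delta$ of the body atoms lie in $\datalogmodel$, and applying the same clause there yields the required head. The critical case is an axiom of type 6, ${A_1\ISA\SOME{R}{A}}$. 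If ${A_1(w)\in\lpmodel}$ produces ${R(w,f_{R,A}(w))\in\lpmodel}$ and ${A(f_{R,A}(w))\in\lpmodel}$, the inductive hypothesis gives ${A_1(\delta(w))\in\datalogmodel}$, and the two datalog clauses ${A_1(x)\to R(x,o_{R,A})}$ and ${A_1(x)\to A(o_{R,A})}$ yield ${R(\delta(w),o_{R,A})\in\datalogmodel}$ and ${A(o_{R,A})\in\datalogmodel}$. By the definition of $\delta$, ${\delta(f_{R,A}(w))=o_{R,A}}$, which closes this case.

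The remaining cases are the equality axioms $\Sigma_{\approx}$. Reflexivity, symmetry, and transitivity transfer immediately because $\datalogmodel$ is closed under $\Sigma_{\approx}$ and the inductive hypothesis applies to the body atoms. For the predicate replacement clause \eqref{eq:prep} applied to an atom ${R(w_1,\ldots,w_n)}$ and equality ${w_i\approx w_i'}$, the inductive hypothesis gives ${R(\delta(w_1),\ldots,\delta(w_n))\in\datalogmodel}$ and ${\delta(w_i)\approx\delta(w_i')\in\datalogmodel}$, so the same clause fires in $\datalogmodel$. For the function replacement clause \eqref{eq:frep}, only unary skolems $f_{R,A}$ appear in $\Xi(\K)$; thus ${f_{R,A}(w)\approx f_{R,A}(w')}$ must be shown to transfer, but both sides map under $\delta$ to $o_{R,A}$, so the conclusion ${\delta(f_{R,A}(w))\approx\delta(f_{R,A}(w'))}$ reduces to ${o_{R,A}\approx o_{R,A}}$, which holds by reflexivity in $\datalogmodel$.

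\medskip

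\noindent\textbf{Main obstacle.}
The delicate point is the interaction between $\approx$ and function terms: the same skolem $f_{R,A}$ generates infinitely many distinct ground terms in $\lpmodel$, all collapsed by $\delta$ to the single constant $o_{R,A}$ in $\datalogmodel$. One must verify that this collapse is harmless, i.e.\ that every inference in $\lpmodel$ touching these terms (via predicate facts, or via propagation of equality) is mirrored in $\datalogmodel$ after applying $\delta$. The bookkeeping for clauses \eqref{eq:prep} and \eqref{eq:frep} carries the bulk of the work, but as sketched above each case closes cleanly once the inductive hypothesis is exploited.
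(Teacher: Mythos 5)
Your proof is correct and takes essentially the same route as the paper's: induction on the stages of the construction of $\lpmodel$, where clauses without function symbols (including the equality axioms) are transferred verbatim to $\dat(\K)\cup\dat(\K)_\approx$ via the inductive hypothesis composed with $\delta$, the type-6 clauses are handled through $\delta(f_{R,A}(w))=o_{R,A}$ and the corresponding clauses $A_1(x)\to R(x,o_{R,A})$ and $A_1(x)\to A(o_{R,A})$, and your explicit treatment of the functional-reflexivity clause (collapsing to $o_{R,A}\approx o_{R,A}$) is a fine way to close that case. One minor slip, which does not affect the argument: $I_0$ consists of all facts occurring in $\Xi(\K)$, i.e.\ $\A$ together with facts $A(a)$ arising from type-1 axioms, but these are also facts of $\dat(\K)$ and mention only individuals from $\indnames$, so the base case goes through exactly as you state.
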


For a similar result in the other direction, we need a couple of definitions.
Let $H$ be an arbitrary Herbrand model. Then, $\restricteddomain{H}$ is the
set containing each term $w$ that occurs in $H$ in at least one fact with a
predicate in ${\conceptnames \cup \setof{\top,\bot} \cup \rolenames}$; note
that, by this definition, we have ${w \not\in \restricteddomain{H}}$ whenever
$w$ occurs in $H$ only in assertions involving the $\approx$ predicate.
Furthermore, \aux{H} is the set of all terms ${w \in \restricteddomain{H}}$
such that, for each term $w'$ with ${w \approx w' \in H}$, we have ${w' \not
\in \indnames}$. We say that the terms in $\aux{H}$ are `true' auxiliary
terms---that is, they are not equal to an individual in $\indnames$. In Figure
\ref{figure:model}, bold terms are `true' auxiliary terms in \lpmodel and
\datalogmodel.

\begin{restatable}{lemma}{lemmadatembed}
    \label{lemma:dat-embed}
    Let \lpmodel and \datalogmodel be the minimal Herbrand models of $\Xi(\K)$
    and $\dat(\K)$, respectively. Mapping $\delta$ satisfies the following
    five properties for all terms $w_1$ and $w_2$ in
    $\restricteddomain{\lpmodel}$, each ${B \in \conceptnames \cup
    \setof{\top,\bot}}$, and each ${R \in \rolenames}$.
    \begin{enumerate}
        \item ${B(\delta(w_1)) \in \datalogmodel}$ implies that ${B(w_1) \in
        \lpmodel}$.
        
        \item ${R(\delta(w_1),\delta(w_2)) \in \datalogmodel}$ and
        ${\delta(w_2) \not\in \aux{\datalogmodel}}$ imply that \\
        ${R(w_1,w_2) \in \lpmodel}$.
        
        \item ${R(\delta(w_1),\delta(w_2)) \in \datalogmodel}$ and
        ${\delta(w_2) \in \aux{\datalogmodel}}$ imply that \\
        $\delta(w_2)$ is of the form $o_{P,A}$, that ${R(w_1,f_{P,A}(w_1)) \in
        \lpmodel}$, and that a term $w_1'$ exists such that ${R(w_1', w_2) \in
        \lpmodel}$.

        \item ${\delta(w_1) \approx \delta(w_2) \in \datalogmodel}$ and
        ${\delta(w_2) \not\in \aux{\datalogmodel}}$ imply that \\
        ${w_1 \approx w_2 \in \lpmodel}$.
        
        \item For each term $u$ occurring in \datalogmodel, term ${w \in
        \restricteddomain{\lpmodel}}$ exists such that ${\delta(w) = u}$.
    \end{enumerate}
\end{restatable}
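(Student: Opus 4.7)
The five properties are mutually dependent, so I would prove them simultaneously by induction on the stage $i$ at which a fact enters $\datalogmodel$. Recall $\datalogmodel=\bigcup_i I_i$, where $I_0$ comprises the ABox facts and $\dat(\T)$-facts with empty body, and $I_{i+1}$ is obtained from $I_i$ by one round of applying the clauses in $\dat(\K)\cup\Sigma_\approx$. The side conditions involving $\aux{\datalogmodel}$ are always interpreted against the limit model, not the stage $I_i$, which is essential for the induction to close. The base case is immediate: facts in $I_0$ mention only individuals, on which $\delta$ is the identity, and each such fact already belongs to $\Xi(\K)$ and hence to $\lpmodel$.

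For the inductive step I would case-split on the clause that derives a new fact at stage $i+1$. Clauses translated from axioms of types 1--5, 7, and 8 have identical shape in $\dat(\T)$ and $\Xi(\T)$, so the matching $\Xi(\T)$-clause fires in $\lpmodel$ on the witnesses supplied by the inductive hypothesis---invoking property~1 for concept body atoms, and properties~2 or~3 for role body atoms, depending on whether the object $\delta$-image lies in $\aux{\datalogmodel}$. The crucial case is the pair of type-6 clauses $A_1(x)\rightarrow R(x,o_{R,A})$ and $A_1(x)\rightarrow A(o_{R,A})$: from $A_1(w_1)\in\lpmodel$ (available by the inductive hypothesis on property~1) the corresponding $\Xi(\T)$-clauses place $R(w_1,f_{R,A}(w_1))$ and $A(f_{R,A}(w_1))$ into $\lpmodel$, and since $\delta(f_{R,A}(w_1))=o_{R,A}$ this delivers property~1 at the new fact together with the concrete witness $w_1'=w_1$ needed for property~3. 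The equality clauses \eqref{eq:ref}--\eqref{eq:tra} transfer through the corresponding $\approx$-rules in $\lpmodel$; \eqref{eq:prep} and \eqref{eq:frep} additionally require property~4 to lift the body equality.

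The main obstacle is the asymmetric treatment of auxiliary terms: $\datalogmodel$ collapses all function images $f_{R,A}(u)$ into a single constant $o_{R,A}$, so role facts and equalities in $\datalogmodel$ involving $o_{R,A}$ cannot in general be lifted verbatim to $\lpmodel$. This is precisely why properties~2 and~4 carry the side condition $\delta(w_2)\notin\aux{\datalogmodel}$: under this condition the definition of $\aux{\cdot}$ hands us an individual $a\in\indnames$ with $\delta(w_2)\approx a\in\datalogmodel$, which---by a prior instance of property~4---anchors $w_2$ to a genuine individual in $\lpmodel$, after which the appropriate congruence step of $\Sigma_\approx$ in $\lpmodel$ completes the case. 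Property~3 is correspondingly weakened: rather than $R(w_1,w_2)\in\lpmodel$, it asserts only the existence of some $w_1'$ with $R(w_1',w_2)\in\lpmodel$, which I would establish by tracing the derivation that first brought $w_2$ into $\restricteddomain{\lpmodel}$ as an $R$-successor.

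Property~5 would be handled by a parallel induction on the first stage at which a term $u$ appears in $\datalogmodel$. If $u\in\indnames$, then $\delta(u)=u$, and $u\in\restricteddomain{\lpmodel}$ follows from inspecting the derivation that first placed a concept or role fact about $u$ into $\datalogmodel$ and invoking property~1 (or~2/3) together with the $\top(\T)$ clauses which force $\top(u)\in\lpmodel$. If $u=o_{R,A}$, then $u$ was introduced by a type-6 clause firing on some $w'$ with $A_1(w')\in\datalogmodel$; the inductive hypothesis gives $A_1(w')\in\lpmodel$, whence $f_{R,A}(w')\in\restricteddomain{\lpmodel}$ with $\delta(f_{R,A}(w'))=o_{R,A}$, as required.
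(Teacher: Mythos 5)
Your overall architecture (simultaneous induction on the stage at which a fact enters $\datalogmodel$, with a case analysis on the clause applied) is the same as the paper's, but the plan has a genuine gap in exactly the cases where equality interacts with auxiliary individuals. You dismiss the clauses \eqref{eq:ref}--\eqref{eq:prep} by saying they ``transfer'' using property~4 to lift the body equality, but property~4 is only available when the right-hand term of the equality is \emph{not} in $\aux{\datalogmodel}$. For the transitivity clause and for the replacement clauses applied with an equality $u \approx o_{R,A}$ where ${o_{R,A} \in \aux{\datalogmodel}}$, the inductive hypothesis gives you nothing, and your anchoring argument (find $a \in \indnames$ with $\delta(w_2) \approx a$) is unavailable by definition of $\aux{\cdot}$. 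What closes these cases in the paper is a separate structural fact, proved by its own induction on the construction of $\datalogmodel$ (Lemma~\ref{lemma:eq}): if ${u_1 \approx u_2 \in \datalogmodel}$ and ${u_2 \in \aux{\datalogmodel}}$, then ${u_1 = u_2}$, i.e.\ `true' auxiliary individuals participate only in trivial equalities, so the replacement/transitivity steps derive nothing new and the inductive hypothesis applies directly. Your proposal never states or proves this fact, and without it the induction does not close.

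A second, smaller problem concerns property~3 in the type-6 case. The property requires a witness $w_1'$ with ${R(w_1',w_2) \in \lpmodel}$ for the \emph{given} term ${w_2 \in \restricteddomain{\lpmodel}}$ with ${\delta(w_2) = o_{R,A}}$; your ``concrete witness ${w_1' = w_1}$'' only yields ${R(w_1,f_{R,A}(w_1)) \in \lpmodel}$, which suffices only if ${w_2 = f_{R,A}(w_1)}$, whereas $w_2$ may be $f_{R,A}(w_2')$ for an unrelated $w_2'$. Your later remark about ``tracing the derivation that first brought $w_2$ into $\restricteddomain{\lpmodel}$'' is the right idea, but it is itself a non-trivial induction on the construction of $\lpmodel$ (the paper's Lemma~\ref{lemma:auxinds}: every $f_{R,A}(w_2') \in \restricteddomain{\lpmodel}$ comes with ${R(w_2',f_{R,A}(w_2')) \in \lpmodel}$), since functional terms can also be (re)introduced by the functional-reflexivity clauses \eqref{eq:frep}, and role facts about them can be copied by congruence. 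Relatedly, in the case ${R(x,y) \wedge x \approx z \rightarrow R(z,y)}$ with the object mapped to a `true' auxiliary, establishing ${R(w_3,f_{P,A}(w_3)) \in \lpmodel}$ requires an explicit appeal to \eqref{eq:frep} in $\Xi(\K)_\approx$ (from ${w_1 \approx w_3}$ infer ${f_{P,A}(w_1) \approx f_{P,A}(w_3)}$), a step your sketch does not account for. These two supporting lemmas are where the real work of the proof lies, so the proposal as it stands is incomplete.
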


Lemmas \ref{lemma:homomorphism} and \ref{lemma:dat-embed} allow us to decide
the satisfiability of \K by answering a simple query over $\dat(\K)$, as shown
in Proposition \ref{prop:sat}. The complexity claim is due to the fact that
each clause in $\dat(\K)$ contains a bounded number of
variables~\cite{DBLP:journals/csur/DantsinEGV01}.

\begin{restatable}{proposition}{propsat}
    \label{prop:sat}
    For \K an arbitrary \elho knowledge base, ${\Xi(\K) \models \exists
    y.\bot(y)}$ if and only if ${\dat(\K) \models \exists y.\bot(y)}$.
    Furthermore, the satisfiability of \K can be checked in time polynomial in
    the size of \K.
\end{restatable}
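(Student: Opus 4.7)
The plan is to derive both directions of the equivalence directly from Lemmas \ref{lemma:homomorphism} and \ref{lemma:dat-embed}, and then appeal to Proposition \ref{prop:prog-size} together with a standard datalog complexity argument for the polynomial bound. Let \lpmodel and \datalogmodel denote the minimal Herbrand models of $\Xi(\K)$ and $\dat(\K)$, respectively.

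For the ``only if'' direction, I would take an arbitrary term $w$ in the Herbrand universe of $\Xi(\K)$ with $\bot(w) \in \lpmodel$; note that $w \in \restricteddomain{\lpmodel}$ because $w$ occurs in a fact with predicate $\bot$. Part~1 of Lemma~\ref{lemma:homomorphism} immediately yields $\bot(\delta(w)) \in \datalogmodel$, which gives $\dat(\K) \models \exists y.\bot(y)$. The converse direction is equally short: assuming $\bot(u) \in \datalogmodel$ for some term $u$, part~5 of Lemma~\ref{lemma:dat-embed} supplies a term $w \in \restricteddomain{\lpmodel}$ with $\delta(w) = u$, and then part~1 of the same lemma (instantiated with $B = \bot$ and $w_1 = w$) gives $\bot(w) \in \lpmodel$, so $\Xi(\K) \models \exists y.\bot(y)$.

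For the complexity claim, I would combine three observations. First, by Proposition~\ref{prop:prog-size}, $\dat(\K)$ is computable in time linear in the size of \K, so the Herbrand universe and the set of predicates of $\dat(\K)$ are both polynomial in $|\K|$. Second, inspection of Table~\ref{table:Xi} (and the treatment of type~6 axioms via the constants $o_{R,A}$) shows that every clause in $\dat(\K)$ contains at most three variables; the same holds for the clauses in $\top(\T)$ and for the equality axiomatisation $\Sigma_\approx$ used to handle $\approx$. Third, evaluating a datalog program with a bounded number of variables per clause can be done in time polynomial in the size of the program and the data \cite{DBLP:journals/csur/DantsinEGV01}, so checking $\dat(\K) \models \exists y.\bot(y)$ takes time polynomial in $|\K|$. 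Together with the equivalence proved above, this gives a polynomial satisfiability test.

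No step looks like a serious obstacle: the two equivalences are essentially immediate corollaries of the lemmas, and the only point that requires a little care is ensuring that the equality axioms $\Sigma_\approx$ do not inflate the arity/variable count enough to break the bounded-variable argument — which they do not, since each clause in $\Sigma_\approx$ uses at most three variables as well.
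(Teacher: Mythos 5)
Your proposal is correct and follows essentially the same route as the paper's own proof: both directions are obtained from Lemmas \ref{lemma:homomorphism} and \ref{lemma:dat-embed}, and the polynomial bound comes from the bounded number of variables per clause in $\dat(\K)$ together with the standard datalog complexity result. If anything, you are slightly more explicit than the paper in invoking property~5 of Lemma \ref{lemma:dat-embed} to obtain a preimage $w \in \restricteddomain{\lpmodel}$ before applying property~1, which is exactly the step the paper's one-line argument glosses over.
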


\section{Answering Conjunctive Queries}

In this section, we fix a satisfiable \elho knowledge base ${\K =
\tuple{\T,\A}}$ and a conjunctive query ${q = \exists \vec{y}.
\psi(\vec{x},\vec{y})}$. Furthermore, we fix $\lpmodel$ and $\datalogmodel$ to
be the minimal Herbrand models of $\Xi(\K)$ and $\dat(\K)$, respectively.

While $\dat(\K)$ can be used to decide the satisfiability of \K, the following
example shows that $\dat(\K)$ cannot be used directly to compute the answers
to $q$.

\begin{example}\label{example:spurious}
Let \K be as in Example \ref{example:models}, and let $q_1$, $q_2$, and $q_3$
be the following conjunctive queries:
\begin{displaymath}
\begin{array}{@{}r@{\;}l@{}}
	q_1 =   & \mathit{taught}(x_1,x_2) \\[1ex]
	q_2 = 	& \exists y_1,y_2,y_3.\; \mathit{taught}(x_1,y_1) \wedge \mathit{taught}(x_2,y_2) \; \wedge \\
			& \quad \mathit{advisor}(y_1,y_3) \wedge \mathit{advisor}(y_2,y_3) \\[1ex]
	q_3 = 	& \exists y.\; \mathit{advisor}(y,y) \\
\end{array}
\end{displaymath}
Furthermore, let $\tau_i$ be the following substitutions:
\begin{displaymath}
\begin{array}{@{}r@{\;}l@{}}
	\tau_1 =    & \setof{x_1 \mapsto \mathit{kr}, \; x_2 \mapsto o_{T,P}} \\[0.7ex]
	\tau_2 =    & \setof{x_1 \mapsto \mathit{kr}, \; x_2 \mapsto \mathit{ai}, \; \\
	            & \qquad\quad y_1 \mapsto o_{T,P}, \; y_2 \mapsto o_{T,P}, \; y_3 \mapsto o_{A,P}} \\[0.7ex]
	\tau_3 =    & \setof{y \mapsto o_{A,P}} \\
\end{array}
\end{displaymath}
Finally, let each $\pi_i$ be the projection of $\tau_i$ to the answer
variables of $q_i$. Using Figure \ref{figure:model}, one can readily check
that ${\dat(\K) \models \tau_i(q_i)}$, but ${\Xi(\K) \not\models \pi_i(q_i)}$,
for each ${1 \leq i \leq 3}$. \fulld
\end{example}

This can be explained by observing that \datalogmodel is a homomorphic image
of \lpmodel. Now homomorphisms preserve CQ answers (i.e.,~${\Xi(\K) \models
\pi(q)}$ implies ${\dat(\K) \models \pi(q)}$), but they can also introduce
unsound answers (i.e., ${\dat(\K) \models \pi(q)}$ does not necessarily imply
${\Xi(\K) \models \pi(q)}$). This gives rise to the following notion of
spurious answers.

\begin{definition}
    A substitution $\tau$ with ${\dom{\tau} = \vec{x} \cup \vec{y}}$ and
    ${\dat(\K) \models \tau(q)}$ is a \emph{spurious answer} to $q$ if
    $\sproj{\tau}{\vec{x}}$ is not a certain answer to $q$ over $\Xi(\K)$.
\end{definition}

Based on these observations, we answer $q$ over \K in two steps: first, we
evaluate $q$ over $\dat(\K)$ and thus obtain an overestimation of the certain
answers to $q$ over $\Xi(\K)$; second, for each substitution $\tau$ obtained
in the first step, we eliminate spurious answers using a special function
$\isSpurious$. We next formally introduce this function. We first present all
relevant definitions, after which we discuss the intuitions. As we shall see,
each query in Example \ref{example:spurious} illustrates a distinct source of
spuriousness that our function needs to deal with.

\begin{definition}\label{def:sim}
    Let $\tau$ be a substitution s.t.\ ${\dom{\tau} = \vec{x} \cup \vec{y}}$
    and ${\dat(\K) \models \tau(q)}$. Relation ${\sim \; \subseteq \terms{q}
    \times \terms{q}}$ for $q$, $\tau$, and $\dat(\K)$ is the smallest
    reflexive, symmetric, and transitive relation closed under the
    $\mathsf{fork}$ rule, where $\aux{\dat(\K)}$ is the set containing each
    individual $u$ from $\dat(\K)$ for which no individual ${c \in \indnames}$
    exists such that ${\dat(\K) \models u \approx c}$.
    \begin{displaymath}
        \AxiomC{$s'\sim t'$}
        \LeftLabel{$\mathsf{(fork)}$}
        \RightLabel{\begin{tabular}{@{\;}l@{}}
                        \footnotesize{$R(s,s')$ and $P(t,t')$ occur in $q$, and} \\
                        \quad \footnotesize{$\tau(s') \in \aux{\dat(\K)}$}
                    \end{tabular}}
        \UnaryInfC{$s \sim t$}
        \DisplayProof
    \end{displaymath}
\end{definition}

Please note that the definition $\aux{\dat(\K)}$ is actually a reformulation
of the definition of $\aux{\datalogmodel}$, but based on the consequences of
$\dat(\K)$ rather than the facts in \datalogmodel.

Relation $\sim$ is reflexive, symmetric, and transitive, so it is an
equivalence relation, which allows us to normalise each term ${t \in
\terms{q}}$ to a representative of its equivalence class using the mapping
$\gamma$ defined below. We then construct a graph $G_\mathsf{aux}$ that checks
whether substitution $\tau$ matches `true' auxiliary individuals in a way that
cannot be converted to a match over `true' auxiliary terms in \lpmodel.

\begin{definition}\label{def:auxgraph}
    Let $\tau$ and $\sim$ be as specified in Definition \ref{def:sim}.
    Function ${\gamma : \terms{q} \mapsto \terms{q}}$ maps each term ${t \in
    \terms{q}}$ to an arbitrary, but fixed representative $\gamma(t)$ of the
    equivalence class of $\sim$ that contains $t$. Furthermore, the directed
    graph ${G_\mathsf{aux} = \tuple{V_\mathsf{aux}, E_\mathsf{aux}}}$ is
    defined as follows.
    \begin{itemize}
        \item Set $V_\mathsf{aux}$ contains a vertex ${\gamma(t) \in
        \terms{q}}$ for each term ${t \in \terms{q}}$ such that ${\tau(t) \in
        \aux{\dat(\K)}}$.

        \item Set $E_\mathsf{aux}$ contains an edge
        $\tuple{\gamma(s),\gamma(t)}$ for each atom of the form $R(s,t)$ in
        $q$ such that ${\setof{\gamma(s),\,\gamma(t)} \subseteq
        V_\mathsf{aux}}$.
    \end{itemize}
    Query $q$ is \emph{aux-cyclic w.r.t.\ $\tau$ and \dat(\K)} if
    $G_\mathsf{aux}$ contains a cycle; otherwise, $q$ is \emph{aux-acyclic
    w.r.t.\ $\tau$ and \dat(\K)}.
\end{definition}

We are now ready to define our function that checks whether a substitution
$\tau$ is a spurious answer.

\begin{definition}\label{def:isSpurious}
    Let $\tau$ and $\sim$ be as specified in Definition \ref{def:sim}. Then,
    function $\spurious{q}{\dat(\K)}{\tau}$ returns $\true$ if and only if at
    least one of the following conditions hold.
    \begin{enumerate}[(a)]
        \item\label{spur:cond1} Variable ${x \in \vec{x}}$ exists such that
        ${\tau(x) \not\in \indnames}$.

        \item\label{spur:cond2} Terms $s$ and $t$ occurring in $q$ exist such
        that ${s \sim t}$ and ${\dat(\K) \not\models \tau(s) \approx
        \tau(t)}$.

        \item\label{spur:cond3} Query $q$ is aux-cyclic w.r.t.\ $\tau$ and
        $\dat(\K)$.
    \end{enumerate} 
\end{definition}

We next discuss the intuition behind our definitions. We ground our discussion
in minimal Herbrand models \lpmodel and \datalogmodel, but our technique does
not depend on such models: all conditions are stated as entailments that can
be checked using an arbitrary sound and complete technique. Since \K is an
\elho knowledge base, model $\lpmodel$ is \emph{forest-shaped}: roughly
speaking, the role assertions in \lpmodel that involve at least one functional
term are of the form $R(w_1,f_{R,A}(w_1))$ or $R(w_1,a)$ for ${a \in
\indnames}$; thus, \lpmodel can be viewed as a family of directed trees whose
roots are the individuals in $\indnames$ and whose edges point from parents to
children or to the individuals in $\indnames$. This is illustrated in Figure
\ref{figure:model}, whose lower part shows the the forest-model of the
knowledge base from Example \ref{example:spurious}. Note that assertions of
the form $R(w_1,a)$ are introduced via equality reasoning.

Now let $\tau$ be a substitution such that ${\dat(\K) \models \tau(q)}$, and
let ${\pi = \sproj{\tau}{\vec{x}}}$. If $\tau$ is not a spurious answer, it
should be possible to convert $\tau$ into a substitution $\pi^*$ such that
${\pi = \sproj{\pi^*}{\vec{x}}}$ and ${\pi^*(q) \subseteq \lpmodel}$. Using
the queries from Example \ref{example:spurious}, we next identify three
reasons why this may not be possible.

First, $\tau$ may map an answer variable of $q$ to an auxiliary individual, so
by the definition $\pi$ cannot be a certain answer to $q$; condition
\eqref{spur:cond1} of Definition \ref{def:isSpurious} identifies such cases.
Query $q_1$ and substitution $\tau_1$ from Example \ref{example:spurious}
illustrate such a situation: ${\tau_2(x_2) = \mathit{o_{T,P}}}$ and
$\mathit{o_{T,P}}$ is a `true' auxiliary individual, so $\pi_1$ is not a
certain answer to $q_1$.

The remaining two problems arise because model $\datalogmodel$ is not
forest-shaped, so $\tau$ might map $q$ into $\datalogmodel$ in a way that
cannot be converted into a substitution $\pi^*$ that maps $q$ into $\lpmodel$.

The second problem is best explained using substitution $\tau_2$ and query
$q_2$ from Example \ref{example:spurious}. Query $q_2$ contains a `fork'
${\mathit{advisor}(y_1,y_3) \wedge \mathit{advisor}(y_2,y_3)}$. Now
${\tau_2(y_3) = o_{A,P}}$ is a `true' auxiliary individual, and so it
represents `true' auxiliary terms $f_{A,P}(f_{T,P}(\mathit{ai}))$, \,
$f_{A,P}(f_{T,P}(\mathit{kr}))$, and so on. Since $\lpmodel$ is forest-shaped,
a match $\pi_2^*$ for $q$ in $\lpmodel$ obtained from $\tau_2$ would need to
map $y_3$ to one of these terms; let us assume that ${\pi_2^*(y_3) =
f_{A,P}(f_{T,P}(\mathit{ai}))}$. Since $\lpmodel$ is forest-shaped and
$f_{A,P}(f_{T,P}(\mathit{ai}))$ is a `true' auxiliary term, this means that
both $y_1$ and $y_2$ must be mapped to the same term (in both \datalogmodel
and \lpmodel). This is captured by the $\mathsf{(fork)}$ rule: in our example,
the rule derives ${y_1 \sim y_2}$, and condition \eqref{spur:cond2} of
Definition \ref{def:isSpurious} checks whether $\tau_2$ maps $y_1$ and $y_2$
in a way that satisfies this constraint. Note that, due to role hierarchies,
the rule needs to be applied to atoms $R(s,s')$ and $P(t,t')$ with ${R \neq
P}$. Moreover, such constraints must be propagated further up the query. In
our example, due to ${y_1 \sim y_2}$, atoms ${\mathit{taught}(x_1,y_1) \wedge
\mathit{taught}(x_2,y_2)}$ in $q_2$ also constitute a `fork', so the rule
derives ${x_1 \sim x_2}$; now this allows condition \eqref{spur:cond2} of
Definition~\ref{def:isSpurious} to correctly identify $\tau_2$ as spurious.

The third problem is best explained using substitution $\tau_3$ and query
$q_3$ from Example \ref{example:spurious}. Model \datalogmodel contains a
`loop' on individual $o_{A,P}$, which allows $\tau_3$ to map $q_3$ into
\datalogmodel. In contrast, model \lpmodel is forest-shaped, and so the `true'
auxiliary terms that correspond to $o_{A,P}$ do not form loops. Condition
\eqref{spur:cond3} of Definition \ref{def:isSpurious} detects such situations
using the graph $G_\mathsf{aux}$. The vertices of $G_\mathsf{aux}$ correspond
to the terms of $q$ that are matched to `true' auxiliary individuals (mapping
$\gamma$ simply ensures that equal terms are represented as one vertex), and
edges of $G_\mathsf{aux}$ correspond to the role atoms in $q$. Hence, if
$G_\mathsf{aux}$ is cyclic, then the substitution $\pi^*$ obtained from $\tau$
would need to match the query $q$ over a cycle of `true' auxiliary terms,
which is impossible since $\lpmodel$ is forest-shaped.

Unlike the known combined approaches, our approach does not extend $q$ with
conditions that detect spurious answers. Due to nominals, the relevant
equality constraints have a recursive nature, and they depend on both the
substitution $\tau$ and on the previously derived constraints. Consequently,
filtering in our approach is realised as postprocessing; furthermore, to
ensure correctness of our filtering condition, auxiliary individuals must
depend on both a role and an atomic concept. The following theorem proves the
correctness of our approach.

\begin{restatable}{theorem}{thmain}\label{th:main}
    Let ${\K = \tuple{\T,\A}}$ be a satisfiable \elho KB, let ${q = \exists
    \vec{y}. \psi(\vec{x},\vec{y})}$ be a CQ, and let ${\pi : \vec{x} \mapsto
    \indnames}$ be a candidate answer for $q$. Then, ${\Xi(\K) \models
    \pi(q)}$ iff a substitution $\tau$ exists such that ${\dom{\tau} = \vec{x}
    \cup \vec{y}}$, ${\sproj{\tau}{\vec{x}} = \pi}$, ${\dat(\K) \models
    \tau(q)}$, and ${\spurious{q}{\dat(\K)}{\tau} = \false}$.
\end{restatable}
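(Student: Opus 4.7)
The plan is to prove the two directions separately, both relying on Lemmas \ref{lemma:homomorphism} and \ref{lemma:dat-embed} and on the fact, sketched informally in the paper, that $\lpmodel$ is forest-shaped: every role assertion involving a functional term has the shape $R(w,f_{R,A}(w))$ or $R(w,a)$ for some ${a \in \indnames}$.

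For the ``only if'' direction, I would start from a substitution $\sigma$ with ${\dom{\sigma} = \vec{x} \cup \vec{y}}$, $\sproj{\sigma}{\vec{x}} = \pi$, and ${\sigma(q) \subseteq \lpmodel}$, and define $\tau$ by ${\tau(x) = \pi(x)}$ on answer variables and ${\tau(y) = \delta(\sigma(y))}$ on existentials. Lemma \ref{lemma:homomorphism} immediately gives ${\dat(\K) \models \tau(q)}$. Condition (a) of Definition \ref{def:isSpurious} fails because $\pi$ maps into $\indnames$. The key step is to show, by induction on the derivation of ${s \sim t}$, the stronger statement that $\sigma(s)$ and $\sigma(t)$ are the \emph{same} term in $\lpmodel$; the fork case uses that the common term ${\sigma(s') = \sigma(t')}$ is a true auxiliary functional term and hence, by forest-shapedness, has a unique parent in the role graph, forcing ${\sigma(s) = \sigma(t)}$. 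Applying $\delta$ yields ${\tau(s) = \tau(t)}$, which discharges (b). For (c), a cycle in $G_\mathsf{aux}$ would, via this identity, lift to a cycle of true auxiliary terms in $\lpmodel$, contradicting forest-shapedness.

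For the ``if'' direction, I would build a substitution $\pi^*$ extending $\pi$ with ${\pi^*(q) \subseteq \lpmodel}$. For each query term $t$ with ${\tau(t) \notin \aux{\dat(\K)}}$ I set ${\pi^*(t) := \tau(t)}$, which lies in $\indnames$ by the definition of $\aux{\dat(\K)}$. For the remaining terms, I process the equivalence classes of $\sim$ in a topological order of $G_\mathsf{aux}$ (which exists thanks to (c)): at each root class I pick a preimage in $\restricteddomain{\lpmodel}$ using Lemma \ref{lemma:dat-embed}(5), and I extend through successors using Lemma \ref{lemma:dat-embed}(3), threading the match through the forest. Condition (b) ensures that this assignment is well-defined on each $\sim$-class. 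The verification that $\pi^*$ maps every atom of $q$ into $\lpmodel$ then splits: concept atoms are handled by Lemma \ref{lemma:dat-embed}(1); role atoms whose target is non-auxiliary by Lemma \ref{lemma:dat-embed}(2), together with Lemma \ref{lemma:dat-embed}(4) where equalities intervene; and role atoms whose target is auxiliary by Lemma \ref{lemma:dat-embed}(3) together with the forest structure.

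The main obstacle is precisely the construction of $\pi^*$ on the auxiliary part. Lemma \ref{lemma:dat-embed}(3) only provides a \emph{local} preimage for each $o_{P,A}$, while $\delta$ collapses many distinct forest positions onto that same individual; assembling local choices into a coherent global match is exactly what conditions (b) and (c) make possible -- (c) removes cyclic dependencies from the witness selection, and (b) guarantees that fork-induced identifications within $q$ are respected. A second delicate point is the bookkeeping around nominals: some auxiliary individuals in $\datalogmodel$ may be collapsed to named individuals by $\approx$, and the careful separation between $\aux{\dat(\K)}$ and $\indnames$ that pervades Definitions \ref{def:sim}--\ref{def:isSpurious} is what keeps these two cases cleanly apart throughout the verification.
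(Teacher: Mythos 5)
Your high-level architecture is the same as the paper's (push a match through $\delta$ and verify conditions (a)--(c) for completeness; build a match in $\lpmodel$ by walking the acyclic auxiliary structure for soundness), but two steps are genuinely broken, and both break at exactly the point where nominals make this extension nontrivial. In the ``only if'' direction, your strengthened invariant -- that $s \sim t$ forces $\sigma(s)$ and $\sigma(t)$ to be the \emph{same} term of $\lpmodel$ -- is false. Because $\approx$ is axiomatised with congruence and functional-reflexivity clauses, $\lpmodel$ contains assertions such as $R(w',f_{R,A}(w))$ with $w' \approx w$ but $w' \neq w$, and $f_{R,A}(w_1) \approx f_{R,A}(w_2)$ for distinct $w_1 \approx w_2$; so a ``true'' auxiliary functional term does not have a unique parent, only a parent that is unique up to $\approx$, and in a fork one can only conclude $\sigma(s) \approx \sigma(t) \in \lpmodel$. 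The informal forest-shapedness you invoke (``every role assertion involving a functional term has the shape $R(w,f_{R,A}(w))$ or $R(w,a)$'') is likewise false as stated for the same reason; the paper has to prove the correct $\approx$-modulo version as a separate lemma (properties P1--P3, via an induction establishing auxiliary invariants about which terms can be $\approx$-equal to a true auxiliary term), and this is the real content behind discharging condition (b). Your argument for (c) inherits the same problem: a cycle in $G_\mathsf{aux}$ lifts only to a cycle \emph{up to $\approx$} in $\lpmodel$, so you need an $\approx$-invariant measure that strictly increases along role edges into true auxiliary terms (the paper's depth function, with P2 and P3) to get a contradiction; ``contradicts forest-shapedness'' does not follow directly.

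In the ``if'' direction, the very first clause of your construction is wrong: if $\tau(t) \not\in \aux{\dat(\K)}$ it does \emph{not} follow that $\tau(t) \in \indnames$. The set $\aux{\dat(\K)}$ excludes every auxiliary individual that is provably $\approx$-equal to a named individual, so $\tau(t)$ may be some $o_{R,A}$ with $\dat(\K) \models o_{R,A} \approx c$ (in the running example, $o_{T,J} \approx \mathit{john}$); such an $o_{R,A}$ is not in $\indnames$ and is not even a term of the Herbrand universe of $\Xi(\K)$, so setting $\pi^*(t) := \tau(t)$ produces something that cannot be matched into $\lpmodel$ at all. The fix is what the paper does: for such terms (and all root classes) choose an arbitrary preimage $w \in \restricteddomain{\lpmodel}$ with $\delta(w) = \tau(t)$, whose existence is exactly Lemma~\ref{lemma:dat-embed}(5), and then verify the atoms using Lemma~\ref{lemma:dat-embed}(1), (2) and (4). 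Relatedly, well-definedness of the assignment on the auxiliary part is not given by condition (b) alone as you claim: uniqueness of the parent class comes from the $\mathsf{fork}$ rule together with $\gamma$ (the paper's query-graph-is-a-forest lemma), and collapsing a $\sim$-class to a single datalog individual uses the additional fact that any derivable equality $u_1 \approx u_2$ with $u_2$ a true auxiliary individual forces $u_1 = u_2$ (the paper's Lemma~\ref{lemma:eq}). These ingredients are missing from your sketch, and without them the construction of $\pi^*$ and the verification of role atoms with auxiliary targets do not go through.
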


Furthermore, $\spurious{q}{\dat(\K)}{\tau}$ can be evaluated in polynomial
time, so the main source of complexity in our approach is in deciding whether
${\dat(\K) \models \tau(q)}$ holds. This gives rise to the following result.

\begin{restatable}{theorem}{thupperbound}\label{th:upperbound}
    Deciding whether ${\K \models \pi(q)}$ holds can be implemented in
    nondeterministic polynomial time w.r.t.\ the size of $\K$ and $q$, and in
    polynomial time w.r.t.\ the size of $\A$.
\end{restatable}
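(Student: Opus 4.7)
The plan is to combine Proposition~\ref{prop:sat} with Theorem~\ref{th:main} and then perform a careful complexity analysis of each subcomputation. First I would separate the satisfiability case: if $\K$ is unsatisfiable, then $\K \models \pi(q)$ holds by convention, and by Proposition~\ref{prop:sat} this check runs in deterministic polynomial time in $|\K|$. Otherwise, by Theorem~\ref{th:main}, the task reduces to deciding whether a substitution $\tau$ exists with $\dom{\tau} = \vec{x} \cup \vec{y}$, $\sproj{\tau}{\vec{x}} = \pi$, $\dat(\K) \models \tau(q)$, and $\spurious{q}{\dat(\K)}{\tau} = \false$.

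For the combined \np bound I would guess such a $\tau$ nondeterministically. Its range can be restricted to the constants of $\dat(\K)$, of which by Proposition~\ref{prop:prog-size} there are only polynomially many, so the guess has polynomial size in $|\K|$ and $|q|$. Verifying $\dat(\K) \models \tau(q)$ amounts to checking, for each ground atom of $\tau(q)$, membership in the minimal Herbrand model of $\dat(\K)$; because every clause in $\dat(\K)$ and every relevant instance of $\Sigma_\approx$ contains a bounded number of variables, this model can be materialised in deterministic polynomial time (the same observation that underlies Proposition~\ref{prop:sat}), reducing each atom check to a lookup. The evaluation of $\spurious{q}{\dat(\K)}{\tau}$ then decomposes into three polynomial subroutines: condition~(a) is a linear scan of $\sproj{\tau}{\vec{x}}$; the equivalence $\sim$ is computed by saturating at most $|\terms{q}|^2$ pairs under the $\mathsf{fork}$ rule, after which condition~(b) performs at most $|\terms{q}|^2$ equality entailments against $\dat(\K)$; and condition~(c) is an acyclicity test on $G_\mathsf{aux}$ by depth-first search. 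Precomputing $\aux{\dat(\K)}$ is polynomial for the same reason, since for each constant $u$ of $\dat(\K)$ one queries $\dat(\K) \models u \approx c$ for the individuals $c \in \indnames$ appearing in $\K$.

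For the data-complexity bound, $\T$ and $q$ are fixed, so $|\vec{x} \cup \vec{y}|$ is constant while the set of constants of $\dat(\K)$ remains polynomial in $|\A|$. Hence there are only polynomially many candidate substitutions $\tau$, and one can enumerate them deterministically and apply the verification above, yielding a polynomial-time algorithm in $|\A|$.

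The main obstacle, I expect, is to justify that every datalog entailment invoked inside the verification is polynomial. This rests on the observation that $\dat(\K) \cup \Sigma_\approx$ retains bounded clause width, so the standard result on bounded-width datalog cited after Proposition~\ref{prop:sat} applies uniformly to the atom checks in $\dat(\K) \models \tau(q)$, the equality entailments required by condition~(b), and the $\aux{\dat(\K)}$ membership tests. Once this uniform polynomial bound is in hand, the remainder of the argument is routine complexity bookkeeping.
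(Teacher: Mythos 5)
Your proposal is correct and follows essentially the same route as the paper's proof: guess a polynomial-size substitution $\tau$, exploit the bounded number of variables per clause in $\dat(\K)$ to materialise its minimal Herbrand model and decide all entailments (including the $\aux{\dat(\K)}$ tests) in polynomial time, compute $\sim$, $G_\mathsf{aux}$, and the three spuriousness conditions in polynomial time, and for data complexity enumerate the polynomially many candidate substitutions for the fixed query. Your explicit treatment of the unsatisfiable case and of restricting the guess to the constants of $\dat(\K)$ are minor bookkeeping details the paper leaves implicit, not a difference in approach.
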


\section{Evaluation}

\begin{table}[t]
    \centering
    \small
    \setlength{\tabcolsep}{.4em}
    \begin{tabular}[t]{|c|>{$}r<{$}>{$}l<{$}|>{$}r<{$}>{$}l<{$}|>{$}r<{$}>{$}l<{$}|}
        \hline 
                & \multicolumn{2}{c|}{\text{Individuals}}       & \multicolumn{2}{c|}{\text{Unary facts}}       & \multicolumn{2}{c|}{\text{Binary facts}} \\
                & \multicolumn{2}{c|}{(\% in \aux{\dat(\K)})}   & \multicolumn{2}{c|}{(\% over \aux{\dat(\K)})} & \multicolumn{2}{c|}{(\% over \aux{\dat(\K)})} \\
        \hline    
        L-5     & 100848        &                               & 169079        &                               & 296941        & \\
        Mat.\   & 100868        & $(0.01)$                      & 309350        & $(0.01)$                      & 632489        & $(49.2)$ \\
        \hline
        L-10    & 202387        &                               & 339746        &                               & 598695        & \\
        Mat.\   & 202407        & $(0.01)$                      & 621158        & $(0.01)$                      & 1277575       & $(49.3)$ \\
        \hline
        L-20    & 426144        &                               & 714692        &                               & 1259936       & \\
        Mat.\   & 426164        & $(0.01)$                      & 1304815       & $(0.01)$                      & 2691766       & $(49.3)$ \\
        \hline
        \hline
        SEM     & 17945         &                               & 17945         &                               & 47248         & \\
        Mat.\   & 17953         & $(0.04)$                      & 25608         & $(0.03)$                      & 76590         & $(38.3)$ \\
        \hline
    \end{tabular}
    \setlength{\abovecaptionskip}{5pt}
    \setlength{\belowcaptionskip}{-10pt}
    \caption{Size of the materialisations.}\label{table:size}
\end{table}
\begin{table*}[t]
    \centering
    \small
    \setlength{\tabcolsep}{.15em}
    \begin{tabular}[b]{|r| *{7}{>{$}r<{$}>{$}c<{$}|}}
        \hline
         LSTW   & \multicolumn{2}{>{$}c<{$}|}{q^l_1}        & \multicolumn{2}{>{$}c<{$}|}{q^l_2}        & \multicolumn{2}{>{$}c<{$}|}{q^l_3}        & \multicolumn{2}{>{$}c<{$}|}{q^l_5}        & \multicolumn{2}{>{$}c<{$}|}{q^l_8}        & \multicolumn{2}{>{$}c<{$}|}{q^l_9}        & \multicolumn{2}{>{$}c<{$}|}{q^l_{10}} \\
                & \text{Tot}    & (\%)                      & \text{Tot}    & (\%)                      & \text{Tot}    & (\%)                      & \text{Tot}    & (\%)                      & \text{Tot}    & (\%)                      & \text{Tot}    & (\%)                      & \text{Tot}    & (\%) \\
        \hline            
         L-5    & 116\text{K}   & \multirow{3}{*}{(4.0)}    & 3.7\text{M}   & \multirow{3}{*}{(100.0)}  & 10            & \multirow{3}{*}{(0.0)}    & 28\text{K}    & \multirow{3}{*}{(0.0)}    & 13\text{K}    & \multirow{3}{*}{(26.0)}   & 1\text{K}     & \multirow{3}{*}{(0.0)}    & 12\text{K}    & \multirow{3}{*}{(74.5)} \\
         L-10   & 233\text{K}   &                           & 32\text{M}    &                           & 22            &                           & 57\text{K}    &                           & 26\text{K}    &                           & 2\text{K}     &                           & 25\text{K}    & \\
         L-20   & 487\text{K}   &                           & 170\text{M}   &                           & 43            &                           & 121\text{K}   &                           & 55\text{K}    &                           & 4\text{K}     &                           & 53\text{K}    & \\ 
         \hline
    \end{tabular}
    
    \vspace{0.3em}
    \centering
    \small
    \setlength{\tabcolsep}{.15em}
    \begin{tabular}{|r| *{9}{>{$}r<{$}>{$}c<{$}|}}
        \hline
                    & \multicolumn{2}{>{$}c<{$}|}{q^s_1}    & \multicolumn{2}{>{$}c<{$}|}{q^s_2}    & \multicolumn{2}{>{$}c<{$}|}{q^s_3}    & \multicolumn{2}{>{$}c<{$}|}{q^s_4}    & \multicolumn{2}{>{$}c<{$}|}{q^s_5}    & \multicolumn{2}{>{$}c<{$}|}{q^s_6}    & \multicolumn{2}{>{$}c<{$}|}{q^s_{7}}  & \multicolumn{2}{>{$}c<{$}|}{q^s_{8}}  & \multicolumn{2}{>{$}c<{$}|}{q^s_{9}} \\
                    & \text{Tot}    & (\%)                  & \text{Tot}    & (\%)                  & \text{Tot}    & (\%)                  & \text{Tot}    & (\%)                  & \text{Tot}    & (\%)                  & \text{Tot}    & (\%)                  & \text{Tot}    & (\%)                  & \text{Tot}    & (\%)                  & \text{Tot}    & (\%) \\
        \hline        
        SEMINTEC    & 7             & (0.0 )                & 53            & (0.0)                 & 16            & (0.0 )                & 12            & (0.0)                 & 31            & (0.0 )                & 838\text{K}   & (55.4)                & 5\text{K}     & (0.0)                 & 5\text{K}     & (54.3)                & 13\text{K}    & (33.3) \\
        \hline
    \end{tabular}
    \setlength{\abovecaptionskip}{5pt}
    \setlength{\belowcaptionskip}{-10pt}
    \caption{Total number of answers and ratio spurious to  answers. In Table LSTW, the ratio is stable for each data set.}\label{table:spurious}
\end{table*}

To gain insight into the practical applicability of our approach, we
implemented our technique in a prototypical system. The system uses HermiT, a
widely used ontology reasoner, as a datalog engine in order to materialise the
consequences of $\dat(\K)$ and evaluate $q$. The system has been implemented
in Java, and we ran our experiments on a MacBook Pro with 4GB of RAM and an
Intel Core 2 Duo 2.4 Ghz processor. We used two ontologies in our evaluation,
details of which are given below. The ontologies, queries, and the prototype
system are all available online at
\url{http://www.cs.ox.ac.uk/isg/tools/KARMA/}.

The LSTW benchmark \cite{Tamingrolehie} consists of an OWL 2 QL version of the
LUBM ontology \cite{DBLP:journals/ws/GuoPH05}, queries ${q^l_1, \ldots,
q^l_{11}}$, and a data generator. The LSTW ontology extends the standard LUBM
ontology with several axioms of type 6 (see Table \ref{table:Xi}). To obtain
an \elho ontology, we removed inverse roles and datatypes, added 11 axioms
using 9 freshly introduced nominals, and added one axiom of type 4 (see Table
\ref{table:Xi}). These additional axioms resemble the ones in Example
\ref{example:translation}, and they were designed to test equality reasoning.
The resulting signature consists of 132 concepts, 32 roles, and 9 nominals,
and the ontology contains 180 axioms. From the 11 LSTW queries, we did not
consider queries $q^l_4$, $q^l_6$, $q^l_7$, and $q^l_{11}$ because their
result sets were empty: $q^l_4$ relies on existential quantification over
inverse roles, and the other three are empty already w.r.t.\ the original LSTW
ontology. Query $q^l_2$ is similar to query $q_2$ from Example
\ref{example:spurious}, and it was designed to produce only spurious answers
and thus stress the system. We generated data sets with $5$, $10$ and $20$
universities. For each data set, we denote with L-$i$ the knowledge base
consisting of our \elho ontology and the ABox for $i$ universities (see Table
\ref{table:size}).

SEMINTEC is an ontology about financial services developed within the SEMINTEC
project at the University of Poznan. To obtain an \elho ontology, we removed
inverse roles, role functionality axioms, and universal restrictions, added
nine axioms of type 6 (see Table \ref{table:Xi}), and added six axioms using 4
freshly introduced nominals. The resulting ontology signature consists of 60
concepts, 16 roles, and 4 nominals, and the ontology contains 173 axioms.
Queries $q^s_1$--$q^s_5$ are tree-shaped queries used in the SEMINTEC project,
and we developed queries $q^s_6$--$q^s_9$ ourselves. Query $q_6^s$ resembles
query $q^l_2$ from LSTW, and queries $q_8^s$ and $q_9^s$ were designed to
retrieve a large number of answers containing auxiliary individuals, thus
stressing condition \eqref{spur:cond1} of Definition \ref{def:isSpurious}.
Finally, the SEMINTEC ontology comes with a data set consisting of
approximately 65,000 facts concerning 18,000 individuals (see row SEM in Table
\ref{table:size}).

The practicality of our approach, we believe, is determined mainly by the
following two factors. First, the number of facts involving auxiliary
individuals introduced during the materialisation phase should not be `too
large'. Table \ref{table:size} shows the materialisation results: the first
column shows the number of individuals before and after materialisation and
the percentage of `true' auxiliary individuals, the second column shows the
number of unary facts before and after materialisation and the percentage of
facts involving a `true' auxiliary individual, and the third column does the
same for binary facts. As one can see, for each input data set, the
materialisation step introduces few `true' auxiliary individuals, and the
number of facts at most doubles. The number of unary facts involving a `true'
auxiliary individual does not change with the size of the input data set,
whereas the number of such binary facts increases by a constant factor. This
is because, in clauses of type 6, atoms $A(o_{R,A})$ do not contain a
variable, whereas atoms $R(x,o_{R,A})$ do.

Second, evaluating $q$ over $\dat(\K)$ should not produce too many spurious
answers. Table \ref{table:spurious} shows the total number of answers for each
query---that is, the number of answers obtained by evaluating the query over
$\dat(\K)$; furthermore, the table also shows what percentage of these answers
are spurious. Queries $q_2^l$, $q^l_{10}$, $q^s_{6}$, and $q^s_{8}$ retrieve a
significant percentage of spurious answers. However, only query $q_2^l$ has
proven to be challenging for our system due to the large number of retrieved
answers, with an evaluation time of about 40 minutes over the largest
knowledge base (L-20). Surprisingly, $q_1^l$ also performed rather poorly
despite a low number of spurious answers, with an evaluation time of about 20
minutes for L-20. All other queries were evaluated in at most a few seconds,
thus suggesting that queries $q_1^l$ and $q_2^l$ are problematical mainly
because HermiT does not implement query optimisation algorithms typically used
in relational databases.

\section{Conclusion}

We presented the first combined technique for answering conjunctive queries
over DL ontologies that include nominals. A preliminary evaluation suggests
the following. First, the number of materialised facts over `true' anonymous
individuals increases by a constant factor with the size of the data. Second,
query evaluation results have shown that, while some cases may be challenging,
in most cases the percentage of answers that are spurious is manageable.
Hence, our technique provides a practical CQ answering algorithm for a large
fragment of OWL 2 EL.

We anticipate several directions for our future work. First, we would like to
investigate the use of top-down query evaluation techniques, such as magic
sets \cite{DBLP:books/aw/AbiteboulHV95} or SLG resolution
\cite{Chen:1993:QEU:153850.153865}. Second, tighter integration of the
detection of spurious answers with the query evaluation algorithms should make
it possible to eagerly detect spurious answers (i.e., before the query is
fully evaluated). \citeA{Tamingrolehie} already implemented a filtering
condition as a user-defined function in a database, but it is unclear to what
extent such an implementation can be used to optimise query evaluation.
Finally, we would like to extend our approach to all of OWL 2 EL.

\section*{Acknowledgements}
This work was supported by the Royal Society; Alcatel-Lucent; the EU FP7
project OPTIQUE; and the EPSRC projects ExODA, MASI$^3$, and QueRe.

\clearpage

\bibliographystyle{aaai}
\bibliography{references}

\begin{thebibliography}{}

\bibitem[\protect\citeauthoryear{Abiteboul, Hull, and
  Vianu}{1995}]{DBLP:books/aw/AbiteboulHV95}
Abiteboul, S.; Hull, R.; and Vianu, V.
\newblock 1995.
\newblock {\em {Foundations of Databases}}.
\newblock Addison-Wesley.

\bibitem[\protect\citeauthoryear{Artale \bgroup et al\mbox.\egroup
  }{2009}]{ACKZ09}
Artale, A.; Calvanese, D.; Kontchakov, R.; and Zakharyaschev, M.
\newblock 2009.
\newblock {The DL-Lite Family and Relations}.
\newblock {\em Journal of Artificial Intelligence Research} 36:1--69.

\bibitem[\protect\citeauthoryear{Baader \bgroup et al\mbox.\egroup
  }{2007}]{DLHB-2007-ed}
Baader, F.; Calvanese, D.; McGuinness, D.; Nardi, D.; and Patel-Schneider,
  P.~F., eds.
\newblock 2007.
\newblock {\em {The Description Logic Handbook: Theory, Implementation, and
  Applications}}.
\newblock Cambridge University Press.
\newblock ISBN 9780511717383.

\bibitem[\protect\citeauthoryear{Baader, Brandt, and Lutz}{2005}]{babl05}
Baader, F.; Brandt, S.; and Lutz, C.
\newblock 2005.
\newblock {Pushing the $\mathcal{EL}$ Envelope}.
\newblock In Kaelbling, L.~P., and Saffiotti, A., eds., {\em Proceedings of the
  19th International Joint Conference on Artificial Intelligence (IJCAI 2005)},
   364--369.
\newblock Edinburgh, UK: Morgan Kaufmann Publishers.

\bibitem[\protect\citeauthoryear{Calvanese \bgroup et al\mbox.\egroup
  }{2007}]{Calvanese2007}
Calvanese, D.; {De Giacomo}, G.; Lembo, D.; Lenzerini, M.; and Rosati, R.
\newblock 2007.
\newblock {Tractable Reasoning and Efficient Query Answering in Description
  Logics: The DL-Lite Family}.
\newblock {\em Journal of Automated Reasoning} 9(3):385--429.

\bibitem[\protect\citeauthoryear{Chen and
  Warren}{1993}]{Chen:1993:QEU:153850.153865}
Chen, W., and Warren, D.~S.
\newblock 1993.
\newblock Query evaluation under the well-founded semantics.
\newblock In {\em Proceedings of the 12th ACM SIGACT-SIGMOD-SIGART symposium on
  Principles of database systems}, PODS '93,  168--179.
\newblock New York, NY, USA: ACM.

\bibitem[\protect\citeauthoryear{Cormen \bgroup et al\mbox.\egroup
  }{2009}]{DBLP:books/daglib/0023376}
Cormen, T.~H.; Leiserson, C.~E.; Rivest, R.~L.; and Stein, C.
\newblock 2009.
\newblock {\em Introduction to Algorithms (3. ed.)}.
\newblock MIT Press.

\bibitem[\protect\citeauthoryear{{Cuenca Grau} \bgroup et al\mbox.\egroup
  }{2008}]{ghmppss08next-steps}
{Cuenca Grau}, B.; Horrocks, I.; Motik, B.; Parsia, B.; Patel-Schneider, P.;
  and Sattler, U.
\newblock 2008.
\newblock {OWL 2: The next step for OWL}.
\newblock {\em Journal of Web Semantics} 6(4):309--322.

\bibitem[\protect\citeauthoryear{Dantsin \bgroup et al\mbox.\egroup
  }{2001}]{DBLP:journals/csur/DantsinEGV01}
Dantsin, E.; Eiter, T.; Gottlob, G.; and Voronkov, A.
\newblock 2001.
\newblock Complexity and expressive power of logic programming.
\newblock {\em ACM Computing Surveys} 33(3):374--425.

\bibitem[\protect\citeauthoryear{Eiter \bgroup et al\mbox.\egroup
  }{2012}]{DBLP:conf/aaai/EiterOSTX12}
Eiter, T.; Ortiz, M.; Simkus, M.; Tran, T.-K.; and Xiao, G.
\newblock 2012.
\newblock {Query Rewriting for Horn-SHIQ Plus Rules}.
\newblock In {\em Proceedings of the 26th AAAI Conference on Artificial
  Intelligence, (AAAI 2012)}.
\newblock AAAI Press.

\bibitem[\protect\citeauthoryear{Fitting}{1996}]{Fitting:1996:FLA:230183}
Fitting, M.
\newblock 1996.
\newblock {\em First-order logic and automated theorem proving (2nd ed.)}.
\newblock Secaucus, NJ, USA: Springer-Verlag New York, Inc.

\bibitem[\protect\citeauthoryear{Glimm \bgroup et al\mbox.\egroup
  }{2008}]{GHLS08a}
Glimm, B.; Horrocks, I.; Lutz, C.; and Sattler, U.
\newblock 2008.
\newblock {Conjunctive Query Answering for the Description Logic
  $\mathcal{SHIQ}$}.
\newblock {\em Journal of Artificial Intelligence Research} 31:151--198.

\bibitem[\protect\citeauthoryear{Guo, Pan, and
  Heflin}{2005}]{DBLP:journals/ws/GuoPH05}
Guo, Y.; Pan, Z.; and Heflin, J.
\newblock 2005.
\newblock {LUBM: A benchmark for OWL knowledge base systems}.
\newblock {\em Journal of Web Semantics} 3(2--3):158--182.

\bibitem[\protect\citeauthoryear{Kontchakov \bgroup et al\mbox.\egroup
  }{2011}]{KoLuToWoZa-IJCAI11}
Kontchakov, R.; Lutz, C.; Toman, D.; Wolter, F.; and Zakharyaschev, M.
\newblock 2011.
\newblock {The Combined Approach to Ontology-Based Data Access}.
\newblock In Walsh, T., ed., {\em Proceedings of the 22nd International Joint
  Conference on Artificial Intelligence (IJCAI 2011)}.
\newblock AAAI Press.

\bibitem[\protect\citeauthoryear{Kr\"{o}tzsch, Rudolph, and
  Hitzler}{2007}]{KRH:elcq07}
Kr\"{o}tzsch, M.; Rudolph, S.; and Hitzler, P.
\newblock 2007.
\newblock Conjunctive queries for a tractable fragment of {OWL}~1.1.
\newblock In Aberer, K.; Choi, K.-S.; Noy, N.; Allemang, D.; Lee, K.-I.; Nixon,
  L.; Golbeck, J.; Mika, P.; Maynard, D.; Mizoguchi, R.; Schreiber, G.; and
  Cudr\'{e}-Mauroux, P., eds., {\em Proceedings of the 6th International
  Semantic Web Conference (ISWC'07)}, volume 4825 of {\em LNCS},  310--323.
\newblock Springer.

\bibitem[\protect\citeauthoryear{Kr\"{o}tzsch, Rudolph, and
  Hitzler}{2008}]{KRH:ELP-08}
Kr\"{o}tzsch, M.; Rudolph, S.; and Hitzler, P.
\newblock 2008.
\newblock {ELP:} {Tractable} rules for {OWL~2}.
\newblock In Sheth, A.; Staab, S.; Dean, M.; Paolucci, M.; Maynard, D.; Finin,
  T.; and Thirunarayan, K., eds., {\em Proceedings of the 7th International
  Semantic Web Conference (ISWC'08)}, volume 5318 of {\em LNCS},  649--664.
\newblock Springer.

\bibitem[\protect\citeauthoryear{Lutz \bgroup et al\mbox.\egroup
  }{2012}]{Tamingrolehie}
Lutz, C.; Seylan, I.; Toman, D.; and Wolter, F.
\newblock 2012.
\newblock {The Combined Approach to {OBDA}: Taming Role Hierarchies using
  Filters}.
\newblock In Fokoue, A.; Liebig, T.; Goodman, E.; Weaver, J.; Urbani, J.; and
  Mizell, D., eds., {\em Proceedings of the Joint Workshop on Scalable and
  High-Performance Semantic Web Systems (SSWS+HPCSW 2012)}, volume 943 of {\em
  CEUR Workshop Proceedings},  16--31.
\newblock CEUR-WS.org.

\bibitem[\protect\citeauthoryear{Lutz, Toman, and
  Wolter}{2009}]{DBLP:conf/ijcai/LutzTW09}
Lutz, C.; Toman, D.; and Wolter, F.
\newblock 2009.
\newblock {Conjunctive Query Answering in the Description Logic $\mathcal{EL}$
  Using a Relational Database System}.
\newblock In Boutilier, C., ed., {\em Proceedings of the 21st International
  Joint Conference on Artificial Intelligence, (IJCAI 2009)},  2070--2075.
\newblock AAAI Press.

\bibitem[\protect\citeauthoryear{Ortiz, Rudolph, and
  Simkus}{2011}]{DBLP:conf/ijcai/OrtizRS11}
Ortiz, M.; Rudolph, S.; and Simkus, M.
\newblock 2011.
\newblock {Query Answering in the Horn Fragments of the Description Logics
  $\mathcal{SHOIQ}$ and $\mathcal{SROIQ}$}.
\newblock In Walsh, T., ed., {\em Proceedings of the 22nd International Joint
  Conference on Artificial Intelligence (IJCAI 2011)},  1039--1044.
\newblock Barcelona, Spain: AAAI Press.

\bibitem[\protect\citeauthoryear{P{\'e}rez-Urbina, Motik, and
  Horrocks}{2010}]{DBLP:journals/japll/Perez-UrbinaMH10}
P{\'e}rez-Urbina, H.; Motik, B.; and Horrocks, I.
\newblock 2010.
\newblock {Tractable Query Answering and Rewriting under Description Logic
  Constraints}.
\newblock {\em Journal of Applied Logic} 8(2):186--209.

\bibitem[\protect\citeauthoryear{Rosati}{2007}]{DBLP:conf/dlog/Rosati07}
Rosati, R.
\newblock 2007.
\newblock {On Conjunctive Query Answering in $\mathcal{EL}$}.
\newblock In Calvanese, D.; Franconi, E.; Haarslev, V.; Lembo, D.; Motik, B.;
  Turhan, A.-Y.; and Tessaris, S., eds., {\em Proceedings of the 20th
  International Workshop on Description Logics (DL-2007)}, CEUR Workshop
  Proceedings.
\newblock CEUR-WS.org.

\end{thebibliography}
\ifdraft{
\clearpage
\appendix
\onecolumn

\section{Additional Proofs}

\subsection{Proof of Lemma \ref{lemma:homomorphism}}

\lemmahomomorphism*

\begin{proof}
Let ${I_0, I_1, \ldots}$ be the sequence of sets used to construct $I$. We
show by induction on $n$ that each $\lpmodel[n]$ satisfies the properties.

\smallskip

\basecase Consider \lpmodel[0] and an arbitrary fact ${H \in \lpmodel[0]}$.
Each term occurring in $H$ is contained in \indnames. Moreover, $H$ is a fact
from $\Xi(\K)$ and, by definition, it is also a fact from $\dat(\K)$. Now
$\delta$ is the identity over \indnames, and \datalogmodel satisfies $H$, so
properties 1 and 2 hold. Property 3 holds vacuously since \lpmodel[0] does not
contain facts with the equality predicate.

\smallskip

\inductivestep Consider an arbitrary ${n \in \nat}$ and assume that
\lpmodel[n] satisfies properties 1--3; we show that the same holds for
\lpmodel[n+1]. Towards this goal, we consider the different clauses in
${\Xi(\K) \cup \Xi(\K)_\approx}$ that can derive fresh facts from \lpmodel[n].
We distinguish the following two cases.

First, consider an arbitrary datalog clause of the form ${\varphi \rightarrow
\psi}$ from ${\Xi(\K) \cup \Xi(\K)_\approx}$. Let $\sigma$ be an arbitrary
substitution mapping variables occurring in the clause to the terms in the
Herbrand universe of $\Xi(\K)$ such that ${\sigma(\varphi) \subseteq
\lpmodel[n]}$, so the clause derives ${\sigma(\psi) \in \lpmodel[n+1]}$. Let
$\sigma'$ be the substitution defined such that ${\sigma'(x) =
\delta(\sigma(x))}$ for each variable $x$ occurring in the clause. By the
inductive hypothesis, we have ${\sigma'(\varphi) \subseteq \datalogmodel}$.
Furthermore, by the definition of $\dat(\K)$, we have that ${\dat(\K) \cup
\dat(\K)_\approx}$ contains ${\varphi \rightarrow \psi}$. Finally, since
\datalogmodel satisfies ${\varphi \rightarrow \psi}$, we have ${\sigma'(\psi)
\in \datalogmodel}$, as required.

Second, consider arbitrary clauses from $\Xi(\K)$ of the form ${A_1(x)
\rightarrow R(x,f_{R,A}(x))}$ and ${A_1(x) \rightarrow A(f_{R,A}(x))}$, and
assume that ${A_1(w) \in \lpmodel[n]}$; hence, these clauses derive
${\setof{R(w,f_{R,A}(w)),\; A(f_{R,A}(w))} \subseteq \lpmodel[n+1]}$. By the
inductive hypothesis, we have ${A_1(\delta(w)) \in \datalogmodel}$.
Furthermore, by the definition of $\delta$, we have that ${\delta(f_{R,A}(w))
= o_{R,A}}$. Moreover, by the definition of program $\dat(\K)$, the program
contains clauses ${A_1(x) \rightarrow R(x,o_{R,A})}$ and ${A_1(x) \rightarrow
A(o_{R,A})}$. Finally, model \datalogmodel satisfies both of these clauses, so
we have ${\setof{R(\delta(w), o_{R,A})\; A(o_{R,A})} \subseteq
\datalogmodel}$, as required.
\end{proof}

\subsection{Proof of Lemma \ref{lemma:dat-embed}}

In order to prove Lemma \ref{lemma:dat-embed}, we use the properties from
Lemmas \ref{lemma:auxinds} and \ref{lemma:eq}.

\begin{lemma}\label{lemma:auxinds}
    For each term $w_2$, each role ${R \in \rolenames}$, and each concept ${A
    \in \conceptnames \cup \setof{\top}}$, if ${f_{R,A}(w_2) \in
    \restricteddomain{\lpmodel}}$, then ${\setof{R(w_2, f_{R,A}(w_2)),\;
    A(f_{R,A}(w_2))} \subseteq \lpmodel}$.
\end{lemma}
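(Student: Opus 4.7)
My plan is to induct on the smallest stage $n$ at which a non-equality fact containing $f_{R,A}(w_2)$ first appears in $\lpmodel[n]$, where $\lpmodel[0] \subseteq \lpmodel[1] \subseteq \ldots$ is the approximating sequence of $\lpmodel$. The base case $n=0$ is vacuous, since $\lpmodel[0]$ consists of ABox facts over $\indnames$ only. For the inductive step, I fix a witnessing fact $F \in \lpmodel[n] \setminus \lpmodel[n-1]$ and case-split on the clause $C \in \Xi(\K) \cup \Xi(\K)_\approx$ and substitution $\sigma$ used to derive $F$.

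Most sub-cases are immediate. Clauses from $\Xi(\T)$ of types 1, 2, 4, 5, 7, 8 copy every term from the body to the head, so $f_{R,A}(w_2)$ would already occur in some atom of $\sigma(\varphi) \subseteq \lpmodel[n-1]$, contradicting the minimality of $n$. Clauses of type 3 and the pure equality axioms \eqref{eq:ref}--\eqref{eq:tra} and \eqref{eq:frep} have equality heads and so cannot produce the non-equality $F$. If $C$ is a type 6 clause $A_1(x) \rightarrow R'(x, f_{R',A'}(x))$ or $A_1(x) \rightarrow A'(f_{R',A'}(x))$, matching its head against $F$ forces $R' = R$, $A' = A$, and $\sigma(x) = w_2$, so the premise $A_1(w_2) \in \lpmodel[n-1]$ is present and both type 6 clauses generated from the same axiom $A_1 \ISA \SOME{R}{A}$ fire, yielding the desired $R(w_2,f_{R,A}(w_2))$ and $A(f_{R,A}(w_2))$.

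The substantive case is $C = $ \eqref{eq:prep}, where $F$ is obtained from a fact $G \in \lpmodel[n-1]$ by replacing some argument $w'$ with $f_{R,A}(w_2)$, using an equality $w' \approx f_{R,A}(w_2) \in \lpmodel[n-1]$. Minimality of $n$ rules out $w' = f_{R,A}(w_2)$ (else $F = G$), so I launch a secondary induction on the derivation depth of $w' \approx f_{R,A}(w_2)$. Reflexivity is excluded; \eqref{eq:sym} and \eqref{eq:tra} reduce to strictly shorter derivations of equalities of the form $f_{R,A}(w_2) \approx u$; a derivation via a type 3 clause $A'(x) \rightarrow x \approx a$ would require $A'(f_{R,A}(w_2)) \in \lpmodel[n-1]$, contradicting the minimality of $n$. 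The decisive case is \eqref{eq:frep}: then $w' = f_{R,A}(w'')$ for some $w''$ with $w_2 \approx w'' \in \lpmodel$, and because $w'$ occurs in the non-equality fact $G$, we already have $f_{R,A}(w'') \in \restricteddomain{\lpmodel[n-1]}$. The primary induction hypothesis then delivers $R(w'', f_{R,A}(w''))$ and $A(f_{R,A}(w''))$ in $\lpmodel$, and two \eqref{eq:prep}-replacements---one along $f_{R,A}(w'') \approx f_{R,A}(w_2)$, another along $w'' \approx w_2$---transport these to $R(w_2, f_{R,A}(w_2))$ and $A(f_{R,A}(w_2))$.

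The main obstacle is precisely this interaction between equality replacement \eqref{eq:prep} and functional congruence \eqref{eq:frep}: a term $f_{R,A}(w_2)$ can enter $\restricteddomain{\lpmodel}$ without any type 6 clause ever firing on $w_2$ itself, so one must trace back through the congruence chain to an `original' argument $w''$ on which type 6 did fire, invoke the induction hypothesis there, and transport the conclusion back to $w_2$. Everything else in the case analysis is either ruled out by minimality or handled uniformly by the type 6 rule.
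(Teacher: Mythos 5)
Your overall plan is sound, and in fact more attentive than the paper's own proof of this lemma (which only discusses type~6 and \eqref{eq:frep} in a stage induction): you correctly isolate the replacement clause \eqref{eq:prep} as the one nontrivial way $f_{R,A}(w_2)$ can first enter $\restricteddomain{\lpmodel}$, and your primary induction on the entry stage, the type~6 case, and the ``head terms occur in the body'' cases are fine. The genuine gap is inside the secondary induction. Your decisive \eqref{eq:frep} case rests on the remark ``because $w'$ occurs in the non-equality fact $G$, we already have $f_{R,A}(w'')\in\restricteddomain{\lpmodel[n-1]}$'', but that remark is only available at the top level, where the equality under analysis really is $w'\approx f_{R,A}(w_2)$ with $w'$ an argument of $G$. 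Your \eqref{eq:sym}/\eqref{eq:tra} cases replace it by a strictly shorter derivation of some equality $f_{R,A}(w_2)\approx u$ whose other side $u$ is an arbitrary term that need not occur in $G$, nor in any non-equality fact at all; when the \eqref{eq:frep} case is then reached for that reduced equality, you cannot place the congruent copy in $\restricteddomain{\lpmodel[n-1]}$, so the primary induction hypothesis cannot be invoked. The situation is not vacuous: the equality used by the replacement can be, e.g., $a\approx f_{R,A}(w_2)$ for an individual $a$, obtained by transitivity from a type-3 equality $f_{R,A}(v)\approx a$ (which presupposes $A'(f_{R,A}(v))$, hence $f_{R,A}(v)$ is in the restricted domain) and a congruence equality $f_{R,A}(v)\approx f_{R,A}(w_2)$; here the other side is never of the shape your decisive case expects, and the restricted-domain copy $f_{R,A}(v)$ that actually makes the conclusion true is reached through the type-3 premise, not through $G$, so your argument never finds it.

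To repair this you need a stronger, uniformly quantified equality invariant proved by induction on the stages $m\le n-1$ inside the primary induction: for every equality in $\lpmodel[m]$ with one side $f_{R,A}(s)$ and other side $t\neq f_{R,A}(s)$, either $t=f_{R,A}(s')$ for some $s'$ with $s'\approx s\in\lpmodel$, or $\setof{R(s,f_{R,A}(s)),\,A(f_{R,A}(s))}\subseteq\lpmodel$ already holds. The type-3 case is then discharged by the primary induction hypothesis (the term it fires on lies in the restricted domain at an earlier stage), not by minimality, and the transitivity case composes the two disjuncts. At the top level you then split: either $w'$ is itself a congruent copy lying in $\restricteddomain{\lpmodel[n-1]}$, and the primary induction hypothesis plus two \eqref{eq:prep} steps finish exactly as you describe, or $f_{R,A}(w_2)$ is already realized. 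This is essentially the structural property A1 that the paper establishes in Lemma~\ref{lemma:treemodel}; without some such strengthening, the step from your sym/tra reduction to your decisive case does not go through as written.
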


\begin{proof}
Let ${\lpmodel[0], \lpmodel[1], \ldots}$ be the sequence used to construct
$I$; we assume w.l.o.g.\ that each $\lpmodel[n+1]$ is obtained from
$\lpmodel[n]$ by applying just one clause type. We show by induction on $n$
that each $\lpmodel[n]$ satisfies the properties. For the base case, set
\lpmodel[0] clearly satisfies the property since it does not contain
functional terms. For the inductive step, assume that some \lpmodel[n]
satisfies the property, and consider an arbitrary term $w_2$, role $R$, and
concept ${A \in \conceptnames \cup \setof{\top}}$. By the construction of
$\Xi(\K)$, there are only two types of clauses that may introduce new
functional terms in $\restricteddomain{\lpmodel[n+1]}$. First, such a term may
be introduced by clauses of type 6 (see Table \ref{table:Xi}), but then the
term clearly satisfies the required property. Second, a clause of the form ${x
\approx y \rightarrow f_{R,A}(x) \approx f_{R,A}(y)}$ may be applied ${w_1
\approx w_2 \in \lpmodel[n]}$ and derive ${f_{R,A}(w_1) \approx f_{R,A}(w_2)
\in \lpmodel[n+1]}$. If ${f_{R,A}(w_2) \in \restricteddomain{\lpmodel[n]}}$,
then set \lpmodel[n+1] satisfies the required property by the induction
hypothesis. Otherwise, term $f_{R,A}(w_2)$ occurs in $\lpmodel[n+1]$ only in
equality assertions, so ${f_{R,A}(w_2) \not\in
\restricteddomain{\lpmodel[n+1]}}$, and the property holds vacuously.
\end{proof}

Let ${\datalogmodel[0], \datalogmodel[1], \ldots}$ be the sequence used to
construct the minimal Herbrand model $\datalogmodel$ of $\dat(\K)$. We assume
w.l.o.g.\ that each \datalogmodel[n+1] is obtained from \datalogmodel[n] by
applying a single clause occurring in $\dat(\K)$, apart from the clause
defining the symmetry of $\approx$ which is always applied so as to keep the
relation $\approx$ in \datalogmodel[n] symmetric. We next show that each
$\datalogmodel[n]$ satisfies the following property.

\begin{lemma}\label{lemma:eq}
    For each ${n \in \nat}$ and all terms $u_1$ and $u_2$, if ${u_1 \approx
    u_2 \in \datalogmodel[n]}$ and ${u_2 \in \aux{\datalogmodel[n]}}$, then
    ${u_1 = u_2}$.
\end{lemma}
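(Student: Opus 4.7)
My plan is to induct on $n$, strengthening the statement to the following invariant $(*)$: for every $u_1 \approx u_2 \in \datalogmodel[n]$ with $u_1 \neq u_2$, there exists an individual $a \in \indnames$ such that $u_2 \approx a \in \datalogmodel[n]$. This immediately implies the lemma by contraposition: if $u_2 \in \aux{\datalogmodel[n]}$, then by definition of $\aux$ no such $a$ can exist, so we must have $u_1 = u_2$. The intuition behind $(*)$ is that the only mechanism for introducing a non-reflexive equality in $\dat(\K)$ is a type 3 clause $A(x) \rightarrow x \approx a$, whose right-hand side is always an individual from $\indnames$. Propagating such equalities through reflexivity, symmetry, and transitivity then always keeps at least one named individual in every nontrivial $\approx$-equivalence class.

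The base case $n = 0$ is immediate: $\datalogmodel[0]$ consists only of facts literally occurring in $\dat(\K)$, namely the ABox assertions and the instantiated type 1 facts $A(a)$, none of which is an equality, so $(*)$ holds vacuously.

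For the inductive step, assume $(*)$ holds at stage $n$ and consider which clause introduces the new fact of $\datalogmodel[n+1]$. Every clause whose head is not an equality---the DL clauses of types 2 and 4--8 from Table \ref{table:Xi}, the clauses in $\top(\T)$, and the instances of \eqref{eq:prep} for non-equality predicates---leaves the equality facts unchanged, so $(*)$ is inherited directly. The equality-producing cases are short: a type 3 clause derives $u \approx a$ with $a \in \indnames$, and $a$ itself witnesses $(*)$; reflexivity derives $u \approx u$, for which $(*)$ is vacuous; symmetry is absorbed by the convention that $\approx$ in each $\datalogmodel[n]$ is already symmetric; transitivity derives $u_1 \approx u_3$ from $u_1 \approx w$ and $w \approx u_3$, and either $w = u_3$ (so the fact was already in $\datalogmodel[n]$ and the IH applies) or $w \neq u_3$ and the IH on $w \approx u_3$ produces an individual $a$ with $u_3 \approx a \in \datalogmodel[n] \subseteq \datalogmodel[n+1]$; the instance of \eqref{eq:prep} for $\approx$ reduces to the combination of symmetry and transitivity.

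The main subtlety, and the reason for the strengthening, is that the non-equality clauses can still enlarge $\restricteddomain{\datalogmodel[n+1]}$ and therefore $\aux{\datalogmodel[n+1]}$: a term may enter $\aux$ only at step $n+1$, which makes direct induction on the lemma's own statement awkward because one would have to track the full history of equalities involving that term. Formulating the invariant purely in terms of equality facts sidesteps this entanglement, since $(*)$ is preserved trivially by the non-equality clauses and only needs to be verified for the equality-producing ones.
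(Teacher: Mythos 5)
Your proof is correct in substance and rests on the same key observation as the paper's: in $\dat(\K)$ the only source of nontrivial equalities is the type~3 clauses, whose right-hand side is a named individual, and reflexivity/symmetry/transitivity (and the instances of \eqref{eq:prep} for $\approx$, which you rightly note are subsumed by symmetry and transitivity) merely propagate this. The difference is in the induction hypothesis: the paper inducts on the lemma's statement itself, and in the transitivity case applies the hypothesis twice (first to conclude $u_2 = u_3$ from $u_2 \approx u_3$ and $u_3 \in \aux{\datalogmodel[n]}$, then again to $u_1 \approx u_2$), whereas you strengthen to the invariant that every nontrivial equality $u_1 \approx u_2$ comes with an explicit witness $u_2 \approx a$ for some $a \in \indnames$, which turns transitivity into a one-step witness propagation and, as you observe, makes the preservation under non-equality clauses trivial even though such clauses can enlarge $\restricteddomain{\datalogmodel[n+1]}$ and hence $\aux{\datalogmodel[n+1]}$. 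One point deserves more care: in the type~3 case your witness for the new fact $u \approx a$ is ``$a$ itself'', which requires $a \approx a \in \datalogmodel[n+1]$, and under the one-clause-per-step construction the reflexivity fact for $a$ need not have been derived yet (only symmetry is maintained eagerly). This is the same benign assumption the paper makes when it asserts that the terms of the derived fact are not in $\aux{\datalogmodel[n+1]}$, and it is easily discharged---e.g., by fixing the w.l.o.g.\ application order so that the reflexivity facts for the finitely many constants of $\dat(\K)$ are derived before any other clause fires, or by weakening your invariant to ``$u_2 \in \indnames$ or $u_2 \approx a$ for some $a \in \indnames$'' together with that convention---so it is a gap in rigour shared with the paper rather than a flaw in your approach.
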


\begin{proof}
We prove the claim by the induction on $n$. For the base case,
\datalogmodel[0] satisfies the property since \aux{\datalogmodel[0]} is empty.
For the inductive step, assume that some \datalogmodel[n] satisfies the
property; we show that the same holds for \datalogmodel[n+1]. We consider the
various clauses that may derive an equality in \datalogmodel[n+1]. The facts
derived by a clause of the form ${A(x) \rightarrow x \approx a}$ vacuously
satisfy the property since the derived fact involves terms that are not in
\aux{\datalogmodel[n+1]}. Furthermore, a fact derived in \datalogmodel[n+1] by
applying either the reflexivity or the symmetry clause satisfies the property
by the inductive hypothesis. We are left to consider the transitivity clause.
Let $u_1$, $u_2$, and $u_3$ be arbitrary terms such that ${\setof{u_1 \approx
u_2,\; u_2 \approx u_3} \subseteq \datalogmodel[n]}$, so the transitivity
clause derives ${u_1 \approx u_3 \in \datalogmodel[n+1]}$. We consider the
interesting case in which ${u_3 \in \aux{\datalogmodel[n+1]}}$, so ${u_3 \in
\aux{\datalogmodel[n]}}$. By the inductive hypothesis, we have ${u_2 = u_3}$;
but then, ${u_2 \in \aux{\datalogmodel[n]}}$, and so again, by the inductive
hypothesis, we have ${u_1 = u_2}$; finally, this implies that ${u_1 = u_3}$.
\end{proof}

\lemmadatembed*

\begin{proof}
Let ${\datalogmodel[0], \datalogmodel[1], \ldots}$ be the sequence as stated
above. We prove the claim by induction on $n$.
	
\smallskip

\basecase Consider \datalogmodel[0]. By definition, ${\Xi(\K) \cup
\Xi(\K)_\approx}$ and ${\dat(\K) \cup \dat(\K)_\approx}$ contain the same
facts, all of which only refer to the individuals in \indnames and the
predicates in ${\conceptnames \cup \rolenames \cup \setof{\top,\, \bot}}$.
Since $\delta$ is the identity over \indnames, $\aux{\datalogmodel[0]}$ is
empty and ${\datalogmodel[0] = \lpmodel[0]}$, so properties 1--5 are
satisfied.

\smallskip

\inductivestep Assume that some \datalogmodel[n] satisfies properties 1--5; we
show that the same holds for \datalogmodel[n+1]. To this end, let $w_1$ and
$w_2$ be arbitrary terms in $\restricteddomain{\lpmodel}$. We next consider
the various clauses in ${\dat(\K) \cup \dat(\K)_\approx}$ that may derive
fresh assertions in \datalogmodel[n+1].

\begin{itemize}
    \item ${A(x) \rightarrow B(x)}$. Assume that ${A(\delta(w_1)) \in
    \datalogmodel[n]}$, and so the clause derives ${B(\delta(w_1)) \in
    \datalogmodel[n+1]}$. By the inductive hypothesis, we have ${A(w_1) \in
    \lpmodel}$. Finally, since the same clause occurs in $\Xi(\K)$, we have
    ${B(w_1) \in \lpmodel}$.

    \item ${A(x) \rightarrow x \approx a}$. Assume that ${A(\delta(w_1)) \in
    \datalogmodel[n]}$, and so for ${\delta(w_2) = w_2 = a}$ the clause
    derives ${\delta(w_1) \approx \delta(w_2)}$ in $\datalogmodel[n+1]$.
    Clearly, we have ${\delta(w_2) \not\in \aux{\datalogmodel[n+1]}}$. By the
    inductive hypothesis, we have ${A(w_1) \in \lpmodel}$. Finally, since the
    same clause occurs in $\Xi(\K)$, we have ${w_1 \approx w_2 \in \lpmodel}$.

    \item ${A_1(x) \wedge A_2(x) \rightarrow A(x)}$. Assume that
    ${A_1(\delta(w_1)) \in \datalogmodel[n]}$ and ${A_2(\delta(w_1)) \in
    \datalogmodel[n]}$, and so the clause derives ${A(\delta(w_1)) \in
    \datalogmodel[n+1]}$. By the inductive hypothesis, we have
    ${\setof{A_1(w_1),\, A_2(w_1)} \subseteq \lpmodel}$. Since the same clause
    occurs in $\Xi(\K)$, we have ${A(w_1) \in \lpmodel}$.
	
    \item ${R(x,y) \wedge A_1(y) \rightarrow A(x)}$. Assume that
    $R(\delta(w_1),\delta(w_2))$ and $A_1(\delta(w_2))$ are in contained
    $\datalogmodel[n]$, and so the clause derives ${A(\delta(w_1)) \in
    \datalogmodel[n+1]}$. We have the following two cases.
	\begin{itemize}\renewcommand{\itemsep}{2pt}
		\item ${\delta(w_2) \not\in \aux{\datalogmodel[n]}}$. By the inductive
        hypothesis, we then have ${\setof{R(w_1,w_2),\; A_1(w_2)} \subseteq
        \lpmodel}$.

		\item ${\delta(w_2) \in \aux{\datalogmodel[n]}}$ and term
        $\delta(w_2)$ is an auxiliary individual of the form $o_{P,A}$. By the
        inductive hypothesis, we then have ${\setof{R(w_1,f_{P,A}(w_1)),\;
        A_1(f_{P,A}(w_1))} \subseteq \lpmodel}$.
	\end{itemize}
    In either case, since the same clause occurs in $\Xi(\K)$, we have
    ${A(w_1) \in \lpmodel}$.
	
    \item ${R(x,y) \rightarrow A(y)}$. Assume that
    ${R(\delta(w_1),\delta(w_2)) \in \datalogmodel[n]}$, so the clause derives
    ${A(\delta(w_2)) \in \datalogmodel[n+1]}$. We have the following two
    cases.
    \begin{itemize}\renewcommand{\itemsep}{2pt}
    	\item ${\delta(w_2) \not\in \aux{\datalogmodel[n]}}$. By the inductive
        hypothesis, we then have ${R(w_1,w_2) \in \lpmodel}$.

		\item ${\delta(w_2) \in \aux{\datalogmodel[n]}}$. By the inductive
        hypothesis, then there exists a term $w_1'$ such that ${R(w_1',w_2)
        \in \lpmodel}$.
    \end{itemize}
    In either case, since the same clause occurs in $\Xi(\K)$, we have
    ${A(w_2) \in \lpmodel}$.

    \item ${S(x,y) \rightarrow R(x,y)}$. Assume that
    ${S(\delta(w_1),\delta(w_2)) \in \datalogmodel[n]}$, and so the clause
    derives ${R(\delta(w_1),\delta(w_2)) \in \datalogmodel[n+1]}$. We have the
    following two cases.
	\begin{itemize}\renewcommand{\itemsep}{2pt}
		\item ${\delta(w_2) \not\in \aux{\datalogmodel[n]}}$. By the inductive
        hypothesis, we have that ${S(w_1,w_2) \in \lpmodel}$. Since the same
        clause occurs in $\Xi(\K)$, we have ${R(w_1,w_2) \in \lpmodel}$.

		\item ${\delta(w_2) \in \aux{\datalogmodel[n]}}$ and $\delta(w_2)$ is
        an auxiliary individual of the form $o_{P,A}$. By the inductive
        hypothesis, then there exists a term $w_1'$ such that
        ${\setof{S(w_1,f_{P,A}(w_1)),\; S(w_1', w_2)} \subseteq \lpmodel}$.
        Since the same clause occurs in $\Xi(\K)$, we have that
        ${\setof{R(w_1,f_{P,A}(w_1)),\; R(w_1',w_2)} \subseteq \lpmodel}$.
	\end{itemize}
	
    \item ${A_1(x) \rightarrow R(x,o_{R,A})}$. Assume that ${A_1(\delta(w_1))
    \in \datalogmodel[n]}$, so for ${\delta(w_2) = o_{R,A}}$ the clause
    derives $R(\delta(w_1),\delta(w_2))$ in $\datalogmodel[n+1]$. By the
    inductive hypothesis, we then have ${A(w_1) \in \lpmodel}$. Furthermore,
    by the definition of $\dat(\K)$, set $\Xi(\K)$ contains the clause
    ${A_1(x) \rightarrow R(x,f_{R,A}(x))}$, so we have ${R(w_1,f_{R,A}(w_1))
    \in \lpmodel}$. We have the following cases.
	\begin{itemize}\renewcommand{\itemsep}{2pt}			
		\item ${\delta(w_2) \not\in \aux{\datalogmodel[n+1]}}$. Thus, we also
        have ${\delta(w_2) \not\in \aux{\datalogmodel[n]}}$, and so there
        exists some ${c \in \indnames}$ such that ${\delta(w_2) \approx
        \delta(c) \in \datalogmodel[n]}$ and ${\delta(c) \not\in
        \aux{\datalogmodel[n]}}$. By the inductive hypothesis, we have ${w_2
        \approx c \in \lpmodel}$. Due to ${\delta(w_2) =
        \delta(f_{R,A}(w_1))}$ and the inductive hypothesis, we have ${c
        \approx f_{R,A}(w_1) \in \lpmodel}$. Since $\approx$ is a congruence
        relation and ${\setof{R(w_1, f_{R,A}(w_1)),\; c \approx
        f_{R,A}(w_1),\; c \approx w_2} \subseteq \lpmodel}$, we have
        ${R(w_1,w_2) \in \lpmodel}$, as required. By the inductive hypothesis,
        property 5 is also satisfied.
		
		\item ${\delta(w_2) \in \aux{\datalogmodel[n+1]}}$. By the definition
        of $\delta$, term $w_2$ is of the form $f_{R,A}(w_2')$, and, by the
        induction hypothesis, we have that ${f_{R,A}(w_2') \in
        \restricteddomain{\lpmodel}}$. By Lemma \ref{lemma:auxinds}, we have
        that ${R(w_2',f_{R,A}(w_2')) \in \lpmodel}$. As stated above,
        ${R(w_1,f_{R,A}(w_1)) \in \lpmodel}$, so property 3 is satisfied.
        Moreover, ${\delta(f_{R,A}(w_1)) = o_{R,A}}$, and so property 5 is
        satisfied as well.
	\end{itemize}
	
   	\item ${A_1(x) \rightarrow A(o_{R,A})}$. Assume that ${A_1(\delta(w_1))
    \in \datalogmodel[n]}$, so for ${\delta(w_2) = o_{R,A}}$ the clause
    derives ${A(\delta(w_2)) \in \datalogmodel[n+1]}$. By the definition of
    $\delta$, term $w_2$ is of the form $f_{R,A}(w_2')$. By Lemma
    \ref{lemma:auxinds} and ${w_2 \in \restricteddomain{\lpmodel}}$, we have
    ${A(w_2) \in \lpmodel}$.

    \item ${\rightarrow x \approx x}$. Assume that $\delta(w_1)$ occurs in
    \datalogmodel[n], so the clause derives ${\delta(w_1) \approx \delta(w_2)
    \in \datalogmodel[n+1]}$ with ${\delta(w_1) = \delta(w_2)}$. We consider
    the interesting case when ${\delta(w_2) \not\in
    \aux{\datalogmodel[n+1]}}$, and so ${\delta(w_2) \not\in
    \aux{\datalogmodel[n]}}$. Then, an individual ${c \in \indnames}$ exists
    such that ${\setof{\delta(w_1) \approx c,\; c \approx \delta(w_2)}
    \subseteq \datalogmodel[n]}$. By the inductive hypothesis, we have that
    ${\setof{w_1 \approx c,\; c \approx w_2} \subseteq \lpmodel}$. By the
    transitivity of $\approx$, we have ${w_1 \approx w_2 \in \lpmodel}$.
	 
    \item ${x_1 \approx x_2 \rightarrow x_2 \approx x_1}$. Assume that
    ${\delta(w_1) \approx \delta(w_2) \in \datalogmodel[n]}$, so the clause
    derives ${\delta(w_2) \approx \delta(w_1) \in \datalogmodel[n+1]}$. We
    consider the interesting case when ${\delta(w_1) \not\in
    \aux{\datalogmodel[n+1]}}$; clearly, we have ${\delta(w_1) \not\in
    \aux{\datalogmodel[n]}}$ as well. Since predicate $\approx$ is symmetric
    in \datalogmodel[n], we have ${\delta(w_2) \approx \delta(w_1) \in
    \datalogmodel[n]}$. By the inductive hypothesis, we have ${w_2 \approx w_1
    \in \lpmodel}$.
		
    \item ${x_1 \approx x_3 \wedge x_3 \approx x_2 \rightarrow x_1 \approx
    x_2}$. Assume that set \datalogmodel[n] contains ${\delta(w_1) \approx
    \delta(w_3)}$ and ${\delta(w_3) \approx \delta(w_2)}$, so the clause
    derives ${\delta(w_1) \approx \delta(w_2) \in \datalogmodel[n+1]}$. The
    only interesting case is when ${\delta(w_2) \not\in
    \aux{\datalogmodel[n+1]}}$; clearly, then ${\delta(w_2) \not\in
    \aux{\datalogmodel[n]}}$. By Lemma \ref{lemma:eq}, then ${\delta(w_3)
    \not\in \aux{\datalogmodel[n]}}$. Finally, by the inductive hypothesis,
    then ${\setof{w_1 \approx w_3,\; w_3 \approx w_2} \subseteq \lpmodel}$,
    which implies ${w_1 \approx w_2 \in \lpmodel}$.

    \item ${A(x) \wedge x \approx y \rightarrow A(y)}$. Assume that facts
    $A(\delta(w_1))$ and ${\delta(w_1) \approx \delta(w_2)}$ are contained in
    \datalogmodel[n], so the clause derives ${A(\delta(w_2)) \in
    \datalogmodel[n+1]}$. By the inductive hypothesis, we have ${A(w_1) \in
    \lpmodel}$. We consider the following two cases.
	\begin{itemize}\renewcommand{\itemsep}{2pt}
        \item ${\delta(w_2) \not\in \aux{\datalogmodel[n]}}$. By the inductive
        hypothesis, we have ${w_1 \approx w_2 \in \lpmodel}$, and so ${A(w_2)
        \in \lpmodel}$.

        \item ${\delta(w_2) \in \aux{\datalogmodel[n]}}$. By Lemma
        \ref{lemma:eq}, then ${\delta(w_1) = \delta(w_2)}$, so
        ${A(\delta(w_2)) \in \datalogmodel[n]}$. Finally, by the inductive
        hypothesis, we then have ${A(w_2) \in \lpmodel}$.
    \end{itemize}

    \item ${R(x,y) \wedge x\approx z \rightarrow R(z,y)}$. Assume that set
    \datalogmodel[n] contains $R(\delta(w_1),\delta(w_2))$ and ${\delta(w_1)
    \approx \delta(w_3)}$, so the clause derives ${R(\delta(w_3),\delta(w_2))
    \in \datalogmodel[n+1]}$. We consider the following two cases.
 	\begin{itemize}\renewcommand{\itemsep}{2pt}
        \item ${\delta(w_2) \not\in \aux{\datalogmodel[n]}}$. By the inductive
        hypothesis, we have ${R(w_1,w_2) \in \lpmodel}$. We distinguish two
        additional cases. First, assume that ${\delta(w_3) \in
        \aux{\datalogmodel[n]}}$. By Lemma \ref{lemma:eq}, we have
        ${\delta(w_1) = \delta(w_3)}$, and so ${R(\delta(w_3),\delta(w_2)) \in
        \datalogmodel[n]}$. By the inductive hypothesis, then ${R(w_3,w_2) \in
        \lpmodel}$. Second, assume that ${\delta(w_3) \not\in
        \aux{\datalogmodel[n]}}$. By the inductive hypothesis, we have ${w_1
        \approx w_3 \in \lpmodel}$, and so we have ${R(w_3,w_2) \in \lpmodel}$
        as well.

        \item ${\delta(w_2) \in \aux{\datalogmodel[n]}}$ and ${\delta(w_2) =
        o_{P,A}}$. By the inductive hypothesis, some $w_1'$ exists s.t.\
        ${\setof{R(w_1,f_{P,A}(w_1)),\; R(w_1',w_2)} \subseteq \lpmodel}$. We
        distinguish two additional cases. First, assume that ${\delta(w_3) \in
        \aux{\datalogmodel[n]}}$. By Lemma \ref{lemma:eq}, we have
        ${\delta(w_1) = \delta(w_3)}$, which further implies
        ${R(\delta(w_3),\delta(w_2)) \in \datalogmodel[n]}$. By the inductive
        hypothesis, then we have ${\setof{R(w_3,f_{P,A}(w_3)),\; R(w_1',w_2)}
        \subseteq \lpmodel}$. Second, assume that ${\delta(w_3) \not\in
        \aux{\datalogmodel[n]}}$. By the inductive hypothesis, we have ${w_1
        \approx w_3 \in\lpmodel}$. By the functional reflexivity clauses, then
        ${f_{R,B}(w_1) \approx f_{R,B}(w_3) \in \lpmodel}$, which again
        implies ${\setof{R(w_3,f_{R,B}(w_3)),\; R(w_1', w_2)} \subseteq
        \lpmodel}$.
 	\end{itemize}

    \item ${R(x,y) \wedge y \approx z \rightarrow R(x,z)}$. Assume that set
    \datalogmodel[n] contains $R(\delta(w_1),\delta(w_2))$ and ${\delta(w_2)
    \approx \delta(w_3)}$, so the clause derives ${R(\delta(w_1),\delta(w_3))
    \in \datalogmodel[n+1]}$. We consider the following two cases.
	\begin{itemize}\renewcommand{\itemsep}{2pt}
        \item ${\delta(w_2) \not\in \aux{\datalogmodel[n]}}$. By Lemma
        \ref{lemma:eq}, then ${\delta(w_3) \not \in\aux{\datalogmodel[n]}}$,
        and so ${\delta(w_3) \not\in \aux{\datalogmodel[n+1]}}$ as well. By
        the inductive hypotheses, then $R(w_1,w_2)$ and ${w_2 \approx w_3}$
        are in $\lpmodel$, so ${R(w_1,w_3) \in \lpmodel}$ as well.

        \item ${\delta(w_2) \in \aux{\datalogmodel[n]}}$ and $\delta(w_2)$ is
        of the form $o_{P,A}$. By Lemma \ref{lemma:eq}, then ${\delta(w_3) =
        \delta(w_2)}$, which implies ${R(\delta(w_1),\delta(w_3)) \in
        \datalogmodel[n]}$. Finally, by the inductive hypothesis, then there
        exists a term $w_1'$ such that ${\setof{R(w_1,f_{R,B}(w_1)),\;
        R(w_1',w_3)} \subseteq \lpmodel}$. \qedhere
    \end{itemize}
\end{itemize}
\end{proof}

\subsection{Proof of Proposition \ref{prop:sat}}

\propsat*

\begin{proof}
From Lemmas \ref{lemma:homomorphism} and \ref{lemma:dat-embed}, we have
${\bot(w) \in \lpmodel}$ if and only if ${\bot(\delta(w)) \in \datalogmodel}$.
Thus, \K is unsatisfiable if and only if individual $u$ exists such that
${\dat(\K) \models \bot(u)}$. Furthermore, to check the latter, we can compute
\datalogmodel and check whether an individual $u$ exists such that ${\bot(u)
\not\in \datalogmodel}$. Since the number of variables occurring in each
datalog clause is bounded by a constant, the computation of \datalogmodel can
be implemented in polynomial time in the size of
\K~\cite{DBLP:journals/csur/DantsinEGV01}.
\end{proof}

\subsection{Proof of Theorem \ref{th:main}}

We first show that the minimal Herbrand model \lpmodel of $\Xi(\K)$ resembles
a forest structure. Let ${\lpmodel[0], \lpmodel[1], \ldots}$ be the sets used
to generate \lpmodel; for simplicity, in the rest of this section we assume
w.l.o.g.\ that the clauses are applied in a way so that relation $\approx$ is
symmetric in each \lpmodel[n]. Furthermore, for each term $w$, we define the
\emph{size of $w$} as follows.
\begin{displaymath}
    \card{w} = \begin{cases}
                    0               & \text{if } w\in\indnames, \\
                    1 + \card{w'}   & \text{if } w \text{ is of the form } f_{T,A}(w'). \\
               \end{cases}
\end{displaymath}
Finally, we define the \emph{depth of $w$ in \lpmodel} as follows.
\begin{displaymath}
    \depth{w,\lpmodel} = \begin{cases}
                            0                       & \text{if } w\not\in\aux{\lpmodel}, \\
                            1 + \depth{w',\lpmodel} & \text{if }w\in\aux{\lpmodel} \text{ and } w=f_{T,A}(w'). \\
                         \end{cases}
\end{displaymath}

\begin{lemma}\label{lemma:treemodel}
    Interpretation \lpmodel satisfies the following three properties for all
    terms $w_1$, $w_1'$, $w_2$, and $w_2'$, all roles $R$, $S$, and $T$, and
    each concept ${A \in \conceptnames \cup \setof{\top}}$.
    \begin{enumerate}[P1.]\renewcommand{\itemsep}{3pt}
        \item ${R(w_1',f_{T,A}(w_1)) \in \lpmodel}$, ${S(w_2',f_{T,A}(w_2))
        \in \lpmodel}$, \\ ${f_{T,A}(w_1) \approx f_{T,A}(w_2) \in \lpmodel}$,
        and ${f_{T,A}(w_2) \in \aux{\lpmodel}}$ imply ${w_1' \approx w_2' \in
        \lpmodel}$.

        \item ${w_1 \approx w_2 \in \lpmodel}$ implies ${\depth{w_1,\lpmodel}
        = \depth{w_2,\lpmodel}}$.

        \item ${R(w_1',f_{T,A}(w_1)) \in \lpmodel}$ and ${f_{T,A}(w_1) \in
        \aux{\lpmodel}}$ imply that ${\depth{f_{T,A}(w_1),\lpmodel} = 1 +
        \depth{w_1',\lpmodel}}$.
    \end{enumerate}
\end{lemma}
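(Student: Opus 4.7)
The plan is to prove the three properties together with an auxiliary claim by simultaneous induction along the sequence $\lpmodel[0], \lpmodel[1], \ldots$ used to construct $\lpmodel$. The base case is immediate since $\lpmodel[0]$ contains only ABox-derived ground facts over $\indnames$, with no functional terms and no equality assertions, so P1--P3 hold vacuously.

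For the inductive step, I would examine each clause in $\Xi(\K) \cup \Xi(\K)_\approx$ that may fire on $\lpmodel[n]$ and check that the fresh facts preserve P1--P3. Most clause types (2, 4, 5, 7, 8, and the reflexivity, symmetry, transitivity, and predicate-replacement equality clauses) are handled routinely by direct appeal to the inductive hypothesis. The substantive cases are Type~6 (which introduces a fresh functional term $f_{T,A}(w)$ together with assertions $T(w, f_{T,A}(w))$ and $A(f_{T,A}(w))$) and the functional-replacement clause $x \approx y \rightarrow f_{T,A}(x) \approx f_{T,A}(y)$.

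The cornerstone is an auxiliary claim that I would establish alongside P1--P3: if $u \approx f_{T,A}(w_1) \in \lpmodel$ and $f_{T,A}(w_1) \in \aux{\lpmodel}$, then $u$ is of the form $f_{T,A}(w'')$ for some $w''$ with $w'' \approx w_1 \in \lpmodel$. The intuition is that an equality chain ending at a true auxiliary term cannot pass through an individual (else the target would fall out of $\aux{\lpmodel}$), nor through a term built from a different function symbol (since equality between distinct functional terms arises only from the functional-replacement rule, which preserves the outer symbol while relating arguments by $\approx$). Granted this claim, P2 follows by structural induction on term size: if $w_1 \approx w_2$ and $w_1 \in \aux{\lpmodel}$, then necessarily $w_2 \in \aux{\lpmodel}$ as well, both share the same outer symbol $f_{T,A}$ with arguments related by $\approx$, and the IH matches their depths; otherwise both have depth $0$. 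Property P3 then reduces to showing $w_1 \approx w' \in \lpmodel$ whenever $R(w', f_{T,A}(w_1))$ is newly derived with $f_{T,A}(w_1) \in \aux{\lpmodel}$, which combined with P2 yields the required depth equation.

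Property P1 is the principal obstacle and the reason for the joint induction. The plan is to trace the derivations of $R(w_1', f_{T,A}(w_1))$ and $S(w_2', f_{T,A}(w_2))$ backwards through $\Xi(\K) \cup \Xi(\K)_\approx$. Each such derivation either (i) fires a Type~6 clause directly, forcing the source to coincide with the argument of $f_{T,A}$; (ii) applies a role-hierarchy clause, leaving the source unchanged; or (iii) arises from predicate replacement with an equality, in which case the IH on P1 together with the auxiliary claim identifies the source up to $\approx$. One concludes $w_1' \approx w_1$ and $w_2' \approx w_2$, while the auxiliary claim applied to $f_{T,A}(w_1) \approx f_{T,A}(w_2)$ yields $w_1 \approx w_2$, and transitivity of $\approx$ finally delivers $w_1' \approx w_2'$. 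The fiddly part will be organising the case analysis on predicate replacement cleanly while ensuring that the hypothesis $f_{T,A}(w_1) \in \aux{\lpmodel}$ remains available throughout the backward trace, so that the auxiliary claim can be reinvoked at each step.
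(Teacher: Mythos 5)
Your proposal is correct and takes essentially the same route as the paper: your auxiliary claim is exactly the paper's invariant A1 (an equality into a ``true'' auxiliary term cannot pass through an individual and preserves the outer function symbol while relating the arguments by $\approx$), your term-size induction for P2 and the reduction of P3 match the paper's arguments, and your backward trace of the derivations of $R(w_1',f_{T,A}(w_1))$ and $S(w_2',f_{T,A}(w_2))$ amounts to the paper's second invariant A2 (every role edge into an auxiliary functional term comes, up to $\approx$, from a type-6 edge), merely proved in a composed, slightly strengthened form (source $\approx$ argument) and organised as a backward trace instead of the paper's forward induction over clause applications.
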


\begin{proof}
To prove properties P1--P3, we first show by induction on $n$ that each
\lpmodel[n] satisfies the following two auxiliary properties for all terms
$w'$, $w$, $w_1$, and $w_2$, all roles $R$, $T$, and $T'$, and all concepts
$A$ and $A'$ in ${\conceptnames \cup \setof{\top}}$.
\begin{enumerate}[{A}1.]\renewcommand{\itemsep}{3pt}
    \item ${f_{T',A'}(w_2) \in \aux{\lpmodel[n]}}$ and ${f_{T,A}(w_1) \approx
    f_{T',A'}(w_2) \in \lpmodel[n]}$ imply that $T = T'$, $A = A'$, and ${w_1
    \approx w_2 \in \lpmodel}$.

    \item ${f_{T,A}(w) \in \aux{\lpmodel[n]}}$ and ${R(w',f_{T,A}(w)) \in
    \lpmodel[n]}$ imply that a term $w''$ exists such that \lpmodel contains
    ${T(w'',f_{T,A}(w''))}$, ${w' \approx w''}$, and ${f_{T,A}(w) \approx
    f_{T,A}(w'')}$.
\end{enumerate}

\smallskip

\basecase By definition, \lpmodel[0] does not contain functional terms, so
properties A1 and A2 are vacuously true.

\smallskip

\inductivestep Assume that \lpmodel[n] satisfies properties A1 and A2; we show
that the same holds for \lpmodel[n+1] by considering in turn the various
clauses that may introduce fresh assertions into \lpmodel[n+1]. We consider
only the interesting cases in where an equality or a binary assertion is
derived, since all other clauses trivially preserve A1 and A2. Let $w'$, $w$,
$w_1$, $w_1'$, and $w_2$ be arbitrary terms, let $R$, $T$, and $T'$ be
arbitrary roles, and let $A$ and $A'$ be arbitrary concepts in ${\conceptnames
\cup \setof{\top}}$.

\begin{itemize}
    \item ${A_1(x) \rightarrow x \approx a}$. Assume that ${A_1(w_1) \in
    \lpmodel[n]}$, so the clause derives ${w_1 \approx a \in \lpmodel[n+1]}$.
    Since ${a \not\in \aux{\lpmodel[n+1]}}$, properties A1 and A2 are
    preserved.

    \item ${\rightarrow x \approx x}$. Assume that $f_{T,A}(w_1)$ occurs in
    \lpmodel[n], so the clause derives ${f_{T,A}(w_1) \approx f_{T,A}(w_1) \in
    \lpmodel[n+1]}$; the interesting case is when ${f_{T,A}(w_1) \in
    \aux{\lpmodel[n+1]}}$. Since $f_{T,A}(w_1)$ occurs in \lpmodel[n], then
    $w_1$ occurs in the Herbrand universe of $\Xi(\K)$. By reflexivity, then
    ${w_1 \approx w_1 \in \lpmodel}$, as required for A1. Furthermore, this
    derivation clearly preserves A2.

    \item ${x \approx y \rightarrow f_{T,A}(x) \approx f_{T,A}(y)}$. Assume
    ${w_1 \approx w_2 \in \lpmodel[n]}$, so the clause derives ${f_{T,A}(w_1)
    \approx f_{T,A}(w_2) \in \lpmodel[n+1]}$; the interesting case is when
    ${f_{T,A}(w_2) \in \aux{\lpmodel[n+1]}}$. By assumption, ${w_1 \approx w_2
    \in \lpmodel[n]}$, and so ${w_1 \approx w_2 \in \lpmodel}$, as required
    for property A1. Furthermore, this derivation clearly preserves A2.

    \item ${x_1 \approx x_2 \rightarrow x_2 \approx x_1}$. Assume that
    ${f_{T',A'}(w_2) \approx f_{T,A}(w_1) \in \lpmodel[n]}$, so the clause
    derives ${f_{T,A}(w_1) \approx f_{T',A'}(w_2) \in \lpmodel[n+1]}$; the
    interesting case is when ${f_{T',A'}(w_2) \in \aux{\lpmodel[n+1]}}$, which
    clearly implies ${f_{T',A'}(w_2) \in \aux{\lpmodel[n]}}$. Since relation
    $\approx$ is symmetric in $\lpmodel[n]$, we have ${f_{T,A}(w_1) \approx
    f_{T',A'}(w_2) \in \lpmodel[n]}$; but then, by the inductive hypothesis,
    we have ${T = T'}$, ${A = A'}$, and ${w_1 \approx w_2 \in \lpmodel}$, as
    required for property A1. Furthermore, this derivation clearly preserves
    A2.

    \item ${x_1 \approx x_2 \wedge x_2 \approx x_3 \rightarrow x_1 \approx
    x_3}$. Assume that \lpmodel[n] contains ${f_{T,A}(w_1) \approx
    f_{T',A'}(w_1')}$ and ${f_{T',A'}(w_1') \approx f_{T'',A''}(w_2)}$, so the
    clause derives ${f_{T,A}(w_1) \approx f_{T'',A''}(w_2) \in
    \lpmodel[n+1]}$; the interesting case is when ${f_{T'',A''}(w_2) \in
    \aux{\lpmodel[n+1]}}$, which clearly implies ${f_{T'',A''}(w_2) \in
    \aux{\lpmodel[n]}}$. Clearly, we then also have ${f_{T',A'}(w_1') \in
    \aux{\lpmodel[n]}}$. By the inductive hypothesis, we have ${T = T' =
    T''}$, ${A = A' = A''}$, and ${\setof{w_1 \approx w_1',\; w_1' \approx
    w_2} \subseteq \lpmodel}$. Thus, we have ${w_1 \approx w_2 \in \lpmodel}$,
    as required for property A1. Furthermore, this derivation clearly
    preserves A2.

    \item ${A_1(x) \rightarrow T(x,f_{T,A}(x))}$. Assume that ${A_1(w') \in
    \lpmodel[n]}$, so the clause derives ${T(w',f_{T,A}(w')) \in
    \lpmodel[n+1]}$; the interesting case is when ${f_{T,A}(w') \in
    \aux{\lpmodel[n+1]}}$ and ${w' = w}$. Then, for ${w'' = w = w'}$, we have
    \begin{displaymath}
        \setof{T(w'',f_{T,A}(w'')),\; w'' \approx w'',\; f_{T,A}(w'') \approx f_{T,A}(w'')} \subseteq \lpmodel,
    \end{displaymath}
    as required for property A2. Furthermore, this derivation clearly
    preserves A1.

    \item ${P(x,y) \rightarrow R(x,y)}$. Assume that ${P(w',f_{T,A}(w)) \in
    \lpmodel[n]}$, so the clause derives ${R(w',f_{T,A}(w)) \in
    \lpmodel[n+1]}$; the interesting case is when ${f_{T,A}(w) \in
    \aux{\lpmodel[n+1]}}$, which implies ${f_{T,A}(w) \in \aux{\lpmodel[n]}}$.
    By the inductive hypothesis, then a term $w''$ exists such that
    ${\setof{T(w'',f_{T,A}(w'')),\; w'\approx w'',\; f_{T,A}(w) \approx
    f_{T,A}(w'')} \subseteq \lpmodel}$, as required for property A2.
    Furthermore, this derivation clearly preserves A1.

    \item ${R(x,y) \wedge x \approx z \rightarrow R(z,y)}$. Assume that
    ${\setof{R(w_1',f_{T,A}(w)),\; w_1' \approx w'} \subseteq \lpmodel[n]}$,
    so the clause derives ${R(w',f_{T,A}(w)) \in \lpmodel[n+1]}$; the
    interesting case is when ${f_{T,A}(w) \in \aux{\lpmodel[n+1]}}$, which
    implies ${f_{T,A}(w) \in \aux{\lpmodel[n]}}$. By the inductive hypothesis,
    a term $w''$ exists such that ${\setof{T(w'',f_{T,A}(w'')),\; w_1' \approx
    w'',\; f_{T,A}(w) \approx f_{T,A}(w'')} \subseteq \lpmodel}$. By the
    transitivity of $\approx$, we have ${w' \approx w'' \in \lpmodel}$, as
    required for property A2. Furthermore, this derivation clearly preserves
    A1.

    \item ${R(x,y) \wedge y \approx z \rightarrow R(x,z)}$. Assume that
    ${\setof{R(w',f_{T,A}(w_1)),\; f_{T,A}(w_1) \approx f_{T,A}(w)} \subseteq
    \lpmodel[n]}$, and so the clause derives the fact ${R(w',f_{T,A}(w)) \in
    \lpmodel[n+1]}$; the interesting case is when ${f_{T,A}(w) \in
    \aux{\lpmodel[n+1]}}$, which implies ${f_{T,A}(w) \in \aux{\lpmodel[n]}}$.
    Then, clearly ${f_{T,A}(w_1) \in \aux{\lpmodel[n]}}$. By the inductive
    hypothesis, a term $w''$ exists such that
    \begin{displaymath}
        \setof{T(w'',f_{T,A}(w'')),\; w' \approx w'',\; f_{T,A}(w_1) \approx f_{T,A}(w'')} \subseteq \lpmodel.
    \end{displaymath}
    By the transitivity of $\approx$, then ${f_{T,A}(w) \approx f_{T,A}(w'')
    \in \lpmodel}$, as required for property A2. Furthermore, this derivation
    clearly preserves A1.
\end{itemize}
We are now ready to show properties P1--P3.

\medskip

\textsc{Property P1}.
Let $w_1'$, $w_1$, $w_2'$, $w_2$ be arbitrary terms, let $R$, $S$, and $T$ be
arbitrary roles, and let $A$ be an arbitrary concept in ${\conceptnames \cup
\setof{\top}}$. Assume that ${\setof{R(w_1',f_{T,A}(w_1)),\;
S(w_2',f_{T,A}(w_2)),\; f_{T,A}(w_1) \approx f_{T,A}(w_2)} \subseteq
\lpmodel}$ and ${f_{T,A}(w_2) \in \aux{\lpmodel}}$. By applying property A2 to
$R(w_1',f_{T,A}(w_1))$ and $S(w_2',f_{T,A}(w_2))$, we have that two terms
$w_1''$ and $w_2''$ exist such that
\begin{align*}
    \setof{T(w_1'',f_{T,A}(w_1'')),\, w_1'\approx w_1'',  f_{T,A}(w_1)\approx f_{T,A}(w_1'')}\subseteq\lpmodel, & \text{ and}\\
    \setof{T(w_2'',f_{T,A}(w_2'')),\, w_2'\approx w_2'',  f_{T,A}(w_2)\approx f_{T,A}(w_2'')}\subseteq\lpmodel. &
\end{align*}
By the transitivity of $\approx$, we have that ${f_{T,A}(w_1'') \approx
f_{T,A}(w_2'') \in \lpmodel}$, and so by Property A1, we conclude that ${w_1''
\approx w_2'' \in \lpmodel}$. Finally, since ${\setof{w'_1 \approx w_1'',\;
w_1'' \approx w_2'',\; w_2'' \approx w_2'} \subseteq \lpmodel}$, by the
transitivity of $\approx$, we get ${w'_1 \approx w'_2 \in \lpmodel}$, as
required.

\medskip

\textsc{Property P2}.
We show by induction on ${n \in \nat}$ that, for all terms $w_1$ and $w_2$
such that ${\card{w_1} \leq \card{w_2} \leq n}$, if ${w_1 \approx w_2 \in
\lpmodel}$, then ${\depth{w_1,\lpmodel} = \depth{w_2,\lpmodel}}$.

\smallskip

\basecase Let $w_1$ and $w_2$ be arbitrary terms such that ${\card{w_1} =
\card{w_2} = 0}$ and ${w_1 \approx w_2\in \lpmodel}$. By the definition of
$\card{\cdot}$, then ${\setof{w_1, w_2} \subseteq \indnames}$, so
${\depth{w_1,\lpmodel} = \depth{w_2,\lpmodel} = 0}$.

\smallskip

\inductivestep Consider an arbitrary ${n \in \nat}$ and assume that the
required property holds for all terms $w'_1$ and $w'_2$ such that
${\card{w'_1} \leq \card{w'_2} \leq n}$; we show that the same holds for
arbitrary terms $w_1$ and $w_2$ such that ${\card{w_1} \leq \card{w_2} \leq
n+1}$. We consider the interesting case when ${w_1 \approx w_2 \in \lpmodel}$,
for which we consider two cases. First, if ${w_2 \not\in \aux{\lpmodel}}$,
then ${\depth{w_1,\lpmodel} = \depth{w_2,\lpmodel} = 0}$. Second, if ${w_2 \in
\aux{\lpmodel}}$, then by property A1 there exist two terms $w_1'$ and $w_2'$,
a role $T$, and a concept ${A \in \conceptnames \cup \setof{\top}}$ such that
$w_1$ is of the form $f_{T,A}(w'_1)$, term $w_2$ is of the form
$f_{T,A}(w'_2)$, and ${w'_1 \approx w'_2 \in \lpmodel}$. By the inductive
hypothesis, then ${\depth{w'_1,\lpmodel} = \depth{w'_2,\lpmodel}}$. Finally,
by definition, we have ${\depth{w_1,\lpmodel} = \depth{w_2,\lpmodel} = 1+
\depth{w'_2,\lpmodel}}$, as required.

\medskip

\textsc{Property P3}.
Let $w_1'$ and $w_1$ be arbitrary terms, let $R$ and $T$ be arbitrary roles,
let ${A \in \conceptnames \cup \setof{\top}}$ be an arbitrary concept, and
assume that ${R(w_1',f_{T,A}(w_1)) \in \lpmodel}$ and ${f_{T,A}(w_1) \in
\aux{\lpmodel}}$. By property A2, then there exists a term $w_1''$ such that
${\setof{T(w_1'',f_{T,A}(w_1'')),\; w_1' \approx w_1'',\; f_{T,A}(w_1) \approx
f_{T,A}(w_1'')} \subseteq \lpmodel}$. By the definition of $\depth{\cdot}$,
then ${\depth{f_{T,A}(w_1''),\lpmodel} = 1 + \depth{w_1'',\lpmodel}}$.
Furthermore, by property P2, then ${\depth{f_{T,A}(w_1),\lpmodel} =
\depth{f_{T,A}(w_1''),\lpmodel}}$ and ${\depth{w_1',\lpmodel} =
\depth{w_1'',\lpmodel}}$. Finally, these observations imply that
${\depth{f_{T,A}(w_1),\lpmodel} = 1 + \depth{w_1',\lpmodel}}$, as required.
\end{proof}

We now have all the ingredients required to prove Theorem \ref{th:main}. We
start by showing completeness.

\begin{lemma}[Completeness]
    Let ${\K = \tuple{\T,\A}}$ be a satisfiable \elho KB, let ${q = \exists
    \vec{y}. \psi(\vec{x},\vec{y})}$ be a CQ, and let ${\pi : \vec{x} \mapsto
    \indnames}$ be a candidate answer for $q$. Then, ${\Xi(\K) \models
    \pi(q)}$ implies that a substitution $\tau$ exists such ${\dom{\tau} =
    \vec{x} \cup \vec{y}}$, ${\sproj{\tau}{\vec{x}} = \pi}$, ${\dat(\K)
    \models \tau(q)}$, and ${\spurious{q}{\dat(\K)}{\tau} = \false}$.
\end{lemma}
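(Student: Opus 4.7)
The plan is to construct $\tau$ directly from a witness $\pi^*$ for $\Xi(\K) \models \pi(q)$ by composing with $\delta$. Since $\Xi(\K) \models \pi(q)$, there is a substitution $\pi^*$ with $\dom{\pi^*} = \vec{x} \cup \vec{y}$, $\sproj{\pi^*}{\vec{x}} = \pi$, and $\pi^*(q) \subseteq \lpmodel$; I set $\tau(z) = \delta(\pi^*(z))$ for each $z \in \vec{x} \cup \vec{y}$. Because $\delta$ is the identity on $\indnames$ and $\pi$ ranges over $\indnames$, $\sproj{\tau}{\vec{x}} = \pi$ holds immediately, which also rules out condition~(a) of Definition~\ref{def:isSpurious}. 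Applying Lemma~\ref{lemma:homomorphism} atom by atom to $\pi^*(q) \subseteq \lpmodel$ yields $\tau(q) \subseteq \datalogmodel$, and hence $\dat(\K) \models \tau(q)$. What remains is to show that conditions~(b) and~(c) of Definition~\ref{def:isSpurious} both fail.

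For condition~(b) I would strengthen the goal and show that $s \sim t$ implies $\pi^*(s) \approx \pi^*(t) \in \lpmodel$; Lemma~\ref{lemma:homomorphism} then gives the desired $\dat(\K) \models \tau(s) \approx \tau(t)$. Since the binary relation $\{(s,t) \mid \pi^*(s) \approx \pi^*(t) \in \lpmodel\}$ is automatically reflexive, symmetric, and transitive, and $\sim$ is the smallest such relation that is also closed under the fork rule, it suffices to verify closure under the fork rule. Given $R(s,s')$ and $P(t,t')$ in $q$ with $\tau(s') \in \aux{\datalogmodel}$ and, inductively, $\pi^*(s') \approx \pi^*(t') \in \lpmodel$, the fact that $\tau(s') = \delta(\pi^*(s'))$ is a genuine auxiliary individual $o_{T,A}$ forces both $\pi^*(s')$ and $\pi^*(t')$ to have the functional shape $f_{T,A}(\cdot)$; moreover, each must lie in $\aux{\lpmodel}$, since otherwise it would be $\approx$-related in $\lpmodel$ to some $c \in \indnames$, and Lemma~\ref{lemma:homomorphism} would push this equality into $\datalogmodel$, contradicting $\tau(s') \in \aux{\datalogmodel}$. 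Property~P1 of Lemma~\ref{lemma:treemodel} then delivers $\pi^*(s) \approx \pi^*(t) \in \lpmodel$.

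For condition~(c) I would argue by contradiction. Suppose $G_\mathsf{aux}$ contains a cycle $\gamma(t_1) \to \gamma(t_2) \to \cdots \to \gamma(t_n) \to \gamma(t_1)$ witnessed by role atoms $R_i(s_i, s_i') \in q$ with $\gamma(s_i) = \gamma(t_i)$ and $\gamma(s_i') = \gamma(t_{i+1})$ (indices mod $n$). The previous step already supplies $\pi^*(s_i) \approx \pi^*(t_i)$ and $\pi^*(s_i') \approx \pi^*(t_{i+1})$ in $\lpmodel$, and the same $\aux{\cdot}$-analysis ensures that each $\pi^*(t_i)$, and consequently each $\pi^*(s_i)$ and $\pi^*(s_i')$, is a functional term of the form $f_{T_i,A_i}(\cdot)$ lying in $\aux{\lpmodel}$. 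Property~P3 of Lemma~\ref{lemma:treemodel} applied to $R_i(\pi^*(s_i),\pi^*(s_i')) \in \lpmodel$ then yields $\depth{\pi^*(s_i'),\lpmodel} = 1 + \depth{\pi^*(s_i),\lpmodel}$, and Property~P2 transports these depths along the equivalences, giving $\depth{\pi^*(t_{i+1}),\lpmodel} = 1 + \depth{\pi^*(t_i),\lpmodel}$. Telescoping around the cycle produces $\depth{\pi^*(t_1),\lpmodel} = \depth{\pi^*(t_1),\lpmodel} + n$, which is impossible.

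I expect the main obstacle to be the fork case inside the proof of condition~(b): invoking property~P1 demands certifying that both $\pi^*(s')$ and $\pi^*(t')$ are honest auxiliary functional terms of matching shape in $\lpmodel$, and this is precisely where the restrictive definition of $\aux{\cdot}$---excluding anything $\approx$-related in $\lpmodel$ to an individual of $\indnames$---earns its keep, and also where the design choice of indexing auxiliary individuals by both a role and a concept becomes essential. Once that alignment is secured, both the equality-closure argument for condition~(b) and the cyclic-depth argument for condition~(c) follow from the forest structure captured by Lemma~\ref{lemma:treemodel}.
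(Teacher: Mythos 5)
Your proposal is correct and follows essentially the same route as the paper's proof: defining $\tau = \delta \circ \pi^*$, using Lemma \ref{lemma:homomorphism} for the query match and condition (a), establishing the strengthened invariant that $s \sim t$ implies $\pi^*(s) \approx \pi^*(t) \in \lpmodel$ (your minimality-of-$\sim$ argument is just the paper's induction on derivation steps in different clothing), handling the fork case via the aux-transfer between $\datalogmodel$ and $\lpmodel$ together with property P1, and refuting aux-cycles with the P2/P3 depth-telescoping argument. The point you flag as the main obstacle (certifying that the terms matched to a true auxiliary individual are functional terms of matching shape in $\aux{\lpmodel}$) is treated at the same level of detail in the paper itself, so no gap relative to the published argument.
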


\begin{proof}
Let \lpmodel and \datalogmodel be the minimal Herbrand models of $\Xi(\K)$ and
$\dat(\K)$, respectively. Since ${\Xi(\K) \models \pi(q)}$, a substitution
$\pi^*$ exists such that ${\dom{\pi^*} = \vec{x} \cup \vec{y}}$,
${\sproj{\pi^*}{\vec{x}} = \pi}$, and ${\pi^*(q) \subseteq \lpmodel}$. Let
$\delta$ be the mapping from \lpmodel to \datalogmodel defined in the section
about the datalog rewriting of \K. We define $\tau$ as the substitution such
that, for each term ${t \in \terms{q}}$, we have ${\tau(t) :=
\delta(\pi^*(t))}$. Finally, let $\sim$ be the relation for $\tau$, $q$, and
$\dat(\K)$ as specified in Definition \ref{def:sim}. Since $\delta$ is a
homomorphism from \lpmodel to \datalogmodel by Lemma \ref{lemma:homomorphism},
we have ${\datalogmodel \models \tau(q)}$. We next prove
${\spurious{q}{\tau}{\dat(\K)} = \false}$ by showing that all conditions of
Definition \ref{def:isSpurious} are satisfied.

\smallskip

(Condition \ref{spur:cond1}) By the definition of $\tau$, for each ${x \in
\vec{x}}$, we have ${\tau(x) \in \indnames}$.

\smallskip

(Condition \ref{spur:cond2}) We prove that, for each ${s \sim t}$, we have
${\tau(s) \approx \tau(t) \in \datalogmodel}$ and ${\pi^*(s) \approx \pi^*(t)
\in \lpmodel}$. We proceed by induction on the number of steps required to
derive ${s\sim t}$. For the base case, the empty relation $\sim$ clearly
satisfies the two properties. For the inductive step, consider an arbitrary
relation $\sim$ obtained in $n$ steps that satisfies these constraints; we
show that the same holds for all constraints derivable from $\sim$. Since
relation $\approx$ in both \datalogmodel and \lpmodel is reflexive, symmetric,
and transitive, the derivation of ${s \sim t}$ due to reflexivity, symmetry,
or transitivity clearly preserves the required properties; thus, we focus on
the $(\mathsf{fork})$ rule. Let $s'$, $s$, $t'$, and $t$ be arbitrary terms in
\terms{q}, and let $R$ and $P$ be arbitrary roles such that ${s' \sim t'}$ is
obtained in $n$ steps, atoms $R(s,s')$ and $P(t,t')$ occur in $q$, and
${\tau(s') \in \aux{\dat(\K)}}$. By the inductive hypothesis, we have
${\tau(s') \approx \tau(t') \in \datalogmodel}$ and ${\pi^*(s') \approx
\pi^*(t') \in \lpmodel}$. Since \datalogmodel is the minimal Herbrand model of
$\dat(\K)$, we have ${\tau(t') \in \aux{\datalogmodel}}$, so no individual ${c
\in \indnames}$ exists such that ${\tau(t') \approx c \in \datalogmodel}$. By
Lemmas \ref{lemma:homomorphism} and \ref{lemma:dat-embed}, ${\tau(t') \not\in
\aux{\datalogmodel}}$ if and only if ${\pi^*(t') \not\in \aux{\lpmodel}}$;
hence, ${\pi^*(t') \in \aux{\lpmodel}}$. Since
${\setof{R(\pi^*(s),\pi^*(s')),\; P(\pi^*(t),\pi^*(t')),\; \pi^*(s') \approx
\pi^*(t')} \subseteq \lpmodel}$ and ${\pi^*(t') \in \aux{\lpmodel}}$, by
property P1 of Lemma \ref{lemma:treemodel} we have ${\pi^*(s) \approx \pi^*(t)
\in \lpmodel}$. Finally, since $\delta$ is a homomorphism (see Lemma
\ref{lemma:homomorphism}), by the construction of $\tau$ we have ${\tau(s)
\approx \tau(t) \in \datalogmodel}$, as required.

(Condition \ref{spur:cond3}) To show that $q$ is aux-acyclic w.r.t.\ $\tau$
and $\dat(\K)$, we assume the opposite; hence, there exists a sequence of
vertices ${v_0, \ldots, v_m \in V_\mathsf{aux}}$ such that $m > 0$, for each
${0 \leq i < m}$ we have ${\tuple{v_i,v_{i+1}} \in E_\mathsf{aux}}$, and ${v_m
= v_0}$. Consider an arbitrary ${i \leq m}$ and the corresponding edge
${\tuple{v_i,v_{i+1}} \in E_\mathsf{aux}}$. By the definition of
$E_\mathsf{aux}$, an atom $R_i(s_i,s_{i+1})$ exists in $q$ such that
${\gamma(s_i) = v_i}$ and ${\gamma(s_{i+1}) = v_{i+1}}$; hence, we have ${s_i
\sim v_i}$ and ${s_{i+1} \sim v_{i+1}}$. Since $\tau$ satisfies all the
constraints in $\sim$, by the definition of $G_\mathsf{aux}$ we have that
${\setof{\tau(s_i),\; \tau(s_{i+1})} \subseteq \aux{\dat(\K)}}$. By Lemmas
\ref{lemma:homomorphism} and \ref{lemma:dat-embed}, then
${\setof{\pi^*(s_i),\; \pi^*(s_{i+1})}\subseteq \aux{\lpmodel}}$ as well. In
addition, since $R_i(s_i,s_{i+1})$ is an atom in $q$, we have
${R_i(\pi^*(s_i),\pi^*(s_{i+1})) \in \lpmodel}$. Also, since ${s_i \sim v_i}$,
${s_{i+1} \sim v_{i+1}}$, and substitution $\pi^*$ satisfies the constraints
in $\sim$, we have ${\setof{\pi^*(s_i) \approx \pi^*(v_i),\; \pi^*(s_{i+1})
\approx \pi^*(v_{i+1})} \subseteq \lpmodel}$. Then, by property P2 of Lemma
\ref{lemma:treemodel}, we have
\begin{align*}
    \depth{\pi^*(s_{i}),\lpmodel} =     &\;\depth{\pi^*(v_i),\lpmodel} \qquad \text{and}\\
    \depth{\pi^*(s_{i+1}),\lpmodel} =   &\;\depth{\pi^*(v_{i+1}),\lpmodel}.
\end{align*}
Finally, since ${R(\pi^*(s_i),\pi^*(s_{i+1})) \in \lpmodel}$, by property P3
of Lemma \ref{lemma:treemodel} we have ${\depth{\pi^*(v_{i+1}),\lpmodel} = 1 +
\depth{\pi^*(v_i),\lpmodel}}$. But then, since $v_m = v_0$, we also have
${\depth{\pi^*(v_{m}),\lpmodel} = \depth{\pi^*(v_0),\lpmodel}}$, which is a
contradiction.
\end{proof}

We are left to prove the soundness of our approach. Let $\tau$ be an arbitrary
substitution for $q$ w.r.t.\ $\dat(\K)$ such that ${\datalogmodel \models
\tau(q)}$ and ${\spurious{q}{\tau}{\dat(\K)} = \false}$. Furthermore, let
$G_\mathsf{aux}$ be the graph as specified in Definition \ref{def:auxgraph}.
We next show that ${\lpmodel \models \sproj{\tau}{\vec{x}}(q)}$. In order to
do so, we first define the graph $G_q$ of the query $q$.

\begin{definition}
    Let $\gamma$ and $V_\mathsf{aux}$ be as in Definition \ref{def:auxgraph}.
    The \emph{query graph} ${G_q = \tuple{V_q,E_q}}$ is the directed graph
    defined as follows.
    \begin{itemize}
        \item $V_q$ is the smallest set containing $\gamma(t)$ for each ${t
        \in \terms{q}}$.
    
        \item $E_q$ is the smallest set containing
        $\tuple{\gamma(s),\gamma(t)}$ for all terms ${\setof{s,t} \subseteq
        \terms{q}}$ such that query $q$ contains $R(s,t)$ for some $R$.
    \end{itemize}
    Vertex ${v \in V_q}$ is a \emph{root} if ${v \not\in V_\mathsf{aux}}$ or,
    for each vertex ${v' \in V_q}$, we have ${\tuple{v',v} \not\in E_q}$.
\end{definition}

Clearly, by the definition, $G_\mathsf{aux}$ is a subgraph $G_q$. We prove the
soundness claim in three steps. First, we show that the graph $G_q$ is a
forest. Second, we define by structural induction on the forest $G_q$ a
substitution $\pi$ for $q$ w.r.t.\ $\Xi(\K)$ such that ${\sproj{\tau}{\vec{x}}
= \sproj{\pi}{\vec{x}}}$. Third, we prove that ${\lpmodel \models \pi(q)}$
holds.

\begin{lemma}\label{lemma:forest}
    If ${\spurious{q}{\tau}{\dat(\K)} = \false}$, then $G_q$ is a forest.
\end{lemma}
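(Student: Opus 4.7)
The plan is to identify two structural properties whose conjunction gives the forest shape of $G_q$: (F1) every vertex $v \in V_\mathsf{aux}$ has at most one incoming edge in $G_q$, and (F2) the induced subgraph $G_\mathsf{aux}$ is acyclic. Property (F2) is essentially free: the hypothesis $\spurious{q}{\tau}{\dat(\K)} = \false$ makes condition (\ref{spur:cond3}) of Definition \ref{def:isSpurious} fail, so by Definition \ref{def:auxgraph} the graph $G_\mathsf{aux}$ contains no cycle.

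For (F1), I would assume that $\tuple{v_1, v}$ and $\tuple{v_2, v}$ both lie in $E_q$ with $v \in V_\mathsf{aux}$, and show $v_1 = v_2$. By the definition of $E_q$, the query $q$ contains role atoms $R(s, s')$ and $P(t, t')$ such that $\gamma(s) = v_1$, $\gamma(t) = v_2$, and $\gamma(s') = \gamma(t') = v$; the last two equalities immediately yield $s' \sim t'$. To apply the $\mathsf{(fork)}$ rule I also need $\tau(s') \in \aux{\dat(\K)}$. This follows from two observations: first, $s' \sim v$ combined with the negation of condition (\ref{spur:cond2}) yields $\dat(\K) \models \tau(s') \approx \tau(v)$; second, $v \in V_\mathsf{aux}$ means $\tau(v) \in \aux{\dat(\K)}$, and membership in $\aux{\dat(\K)}$ is preserved across $\dat(\K)$-equivalence---were $\tau(s')$ equivalent in $\dat(\K)$ to some individual $c \in \indnames$, then by transitivity of $\approx$ so would $\tau(v)$ be, contradicting $\tau(v) \in \aux{\dat(\K)}$. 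Applying $\mathsf{(fork)}$ then yields $s \sim t$, whence $v_1 = \gamma(s) = \gamma(t) = v_2$.

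Combined, (F1) and (F2) give the forest structure on which the subsequent structural induction defining $\pi$ is based: each auxiliary vertex is either a root (in the sense of Definition \ref{def:auxgraph}) or has a unique parent, while non-auxiliary vertices serve as additional roots. I expect the main obstacle to be the bookkeeping step that transfers $\aux{\dat(\K)}$-membership across $\sim$-equivalence via condition (\ref{spur:cond2}); once this is in place the $\mathsf{(fork)}$ rule delivers (F1) immediately, and (F2) is a direct restatement of aux-acyclicity.
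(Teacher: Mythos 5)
Your proof is correct and takes essentially the same route as the paper: acyclicity of $G_\mathsf{aux}$ follows directly from the failure of condition (\ref{spur:cond3}), and uniqueness of the parent of each vertex in $V_\mathsf{aux}$ follows from the $\mathsf{(fork)}$ rule together with $\gamma(s)=\gamma(t)$. Your explicit transfer of $\aux{\dat(\K)}$-membership across provable equality, using the failure of condition (\ref{spur:cond2}), spells out a step the paper leaves implicit when it asserts ${\setof{\tau(s'),\tau(t')} \subseteq \aux{\datalogmodel}}$, so it is a refinement rather than a divergence.
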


\begin{proof}
Due to ${\spurious{q}{\tau}{\dat(\K)} = \false}$, we have that
$G_\mathsf{aux}$ is a direct acyclic graph. Consider an arbitrary vertex ${v
\in V_\mathsf{aux}}$ and arbitrary vertices ${v_1, v_2 \in V_q}$ such that
${\setof{\tuple{v_1,v},\, \tuple{v_2,v}} \subseteq E_q}$; we next show that
${v_1 = v_2}$. By the definition of $G_q$, we have that ${\setof{s, s', t, t'}
\subseteq \terms{q}}$, and that roles $R$ and $P$ exist such that all of the
following conditions are satisfied:
\begin{itemize}
	\item atoms $R(s,s')$ and $P(t,t')$ are in $q$; 

    \item ${\gamma(s') = v = \gamma(t')}$, ${\gamma(s) = v_1}$, and
    ${\gamma(t) = v_2}$; and,

	\item ${\setof{\tau(s'),\tau(t')} \subseteq \aux{\datalogmodel}}$.
\end{itemize}
Due to the $(\mathsf{fork})$ rule, we have ${s \sim t}$. By the definition of
$\gamma$, we have ${\gamma(s) = \gamma(t)}$, which implies ${v_1 = v_2}$, as
required.
\end{proof}

By structural induction on the forest-shaped graph $G_q$, we next define the
substitution $\pi$ as follows; we will later show that ${\Xi(\K) \models
\pi(q)}$.
\begin{itemize}
    \item For the base case, let $v$ be an arbitrary root of $G_q$. For each
    term ${t \in \terms{q}}$ such that ${\gamma(t) = v}$, we define $\pi(t)$
    as an arbitrary term ${w \in \restricteddomain{\lpmodel}}$ such that
    ${\delta(w) = \tau(t)}$.

    \item For the inductive step, let $v$ be an arbitrary vertex of $G_q$ such
    that ${v \in V_\mathsf{aux}}$, term $\tau(v)$ is of the form $o_{R,A}$,
    the value of $\pi(v)$ is undefined, $v'$ is the unique vertex of $G_q$
    such that ${\tuple{v',v} \in E_q}$, and $\pi(v')$ has already been
    defined. For each term ${t \in \terms{q}}$ such that ${\gamma(t) = v}$, we
    define ${\pi(t) := f_{R,A}(\pi(v'))}$.
\end{itemize}

\begin{lemma}\label{lemma:pi}
    Substitution $\pi$ satisfies the two following properties for each term
    ${v \in V_q}$ and all terms ${s,t \in \terms{q}}$ such that ${\gamma(s) =
    v = \gamma(t)}$:
    \begin{enumerate}[{M}1.]
    	\item ${\delta(\pi(s)) = \tau(s)}$, and

    	\item ${\pi(s) \approx \pi(t) \in \lpmodel}$.
    \end{enumerate}
\end{lemma}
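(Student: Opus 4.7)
The plan is to prove properties M1 and M2 simultaneously by structural induction on the forest $G_q$, whose well-foundedness is secured by Lemma \ref{lemma:forest}, processing vertices in the order in which $\pi$ is defined (roots first, then descendants along each tree). Throughout, I exploit the fact that $\gamma(s) = \gamma(t)$ forces $s \sim t$, so condition~(b) of $\spurious{q}{\dat(\K)}{\tau} = \false$ immediately yields $\dat(\K) \models \tau(s) \approx \tau(t)$.

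For the base case, fix a root $v$ of $G_q$ and any two terms $s, t \in \terms{q}$ with $\gamma(s) = \gamma(t) = v$. Property M1 is immediate from the construction of $\pi$: a witness $w$ with $\delta(w) = \tau(s)$ exists by property~5 of Lemma \ref{lemma:dat-embed}, so $\delta(\pi(s)) = \tau(s)$ by definition. For M2, I split according to whether $v \in V_\mathsf{aux}$. If $v \notin V_\mathsf{aux}$, then $\tau(s), \tau(t) \notin \aux{\dat(\K)}$; combining M1 with $\dat(\K) \models \tau(s) \approx \tau(t)$ gives $\delta(\pi(s)) \approx \delta(\pi(t)) \in \datalogmodel$, and property~4 of Lemma \ref{lemma:dat-embed} lifts this to $\pi(s) \approx \pi(t) \in \lpmodel$. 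If $v \in V_\mathsf{aux}$ (so the root has no incoming edges in $G_q$), then $\tau(s), \tau(t) \in \aux{\dat(\K)}$, and Lemma \ref{lemma:eq} applied to $\dat(\K) \models \tau(s) \approx \tau(t)$ collapses them to a single syntactic term $o_{R,A}$; it then suffices to pick a uniform witness $w$ for every element of the equivalence class, so $\pi(s) = \pi(t) = w$ and M2 holds by reflexivity of $\approx$.

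For the inductive step, let $v \in V_\mathsf{aux}$ be a non-root vertex with $\tau(v) = o_{R,A}$, let $v'$ be its unique parent in $G_q$ (uniqueness is ensured by Lemma \ref{lemma:forest}), and assume that $\pi(v')$ has been defined and satisfies M1. For any $t$ with $\gamma(t) = v$ we have $t \sim v$, and since $\tau(v) = o_{R,A} \in \aux{\dat(\K)}$, Lemma \ref{lemma:eq} applied to $\dat(\K) \models \tau(t) \approx \tau(v)$ forces $\tau(t) = o_{R,A}$. Thus $\delta(\pi(t)) = \delta(f_{R,A}(\pi(v'))) = o_{R,A} = \tau(t)$, which verifies M1. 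Since $\pi(s) = \pi(t) = f_{R,A}(\pi(v'))$ syntactically for any two such $s, t$, M2 is immediate by reflexivity of $\approx$.

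The main obstacle is the base case with $v \in V_\mathsf{aux}$: the nondeterministic choice of witness permitted by property~5 of Lemma \ref{lemma:dat-embed} could in principle produce distinct $\pi(s)$ and $\pi(t)$ that are not $\approx$-related in $\lpmodel$, since witnesses of an auxiliary individual $o_{R,A}$ can range over many non-equivalent functional terms. The resolution is Lemma \ref{lemma:eq}, which forces $\tau(s) = \tau(t)$ syntactically on any equivalence class meeting $\aux{\dat(\K)}$ and thereby licenses a single uniform choice across the class. Once this coherence is in place, the inductive step is essentially mechanical, and the forest structure of $G_q$ guarantees that the recursive definition of $\pi$ never creates conflicting constraints.
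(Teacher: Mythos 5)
Your proof is correct and follows essentially the same route as the paper's: structural induction on the forest $G_q$, using condition~(\ref{spur:cond2}) of non-spuriousness together with Lemma~\ref{lemma:eq} to collapse $\tau$-values on classes meeting $\aux{\dat(\K)}$, property~4 of Lemma~\ref{lemma:dat-embed} for the non-auxiliary case, and $\delta(f_{R,A}(\pi(v'))) = o_{R,A}$ in the inductive step. The only (harmless) differences are that you make explicit the uniform choice of witness within an equivalence class and apply Lemma~\ref{lemma:eq} to the limit model rather than via the stage-$n$ argument the paper spells out, both of which are routine to bridge.
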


\begin{proof}
We prove properties M1 and M2 by the structural induction on the forest $G_q$.

\smallskip

\basecase Let $v$ be an arbitrary root of $G_q$, and let ${s, t \in
\terms{q}}$ be arbitrary terms such that ${\gamma(s) = v = \gamma(t)}$.
Property M1 follows from the fact that ${\pi(s) \in \setof{w \in
\restricteddomain{\lpmodel} \mid \delta(w) = \tau(s)}}$. We next prove
property M2. By the definition of $\gamma$, we have that ${s \sim t}$. Since
${\spurious{q}{\tau}{\dat(\K)} = \false}$, we have ${\tau(s) \approx \tau(t)
\in \datalogmodel}$. We have the following two cases.
\begin{itemize}
    \item Assume that ${v \in V_\mathsf{aux}}$. Clearly, ${\setof{\tau(s),\,
    \tau(t)} \subseteq \aux{\datalogmodel}}$. By the construction of
    $\datalogmodel$, there exists ${n \in \nat}$ such that ${\tau(s) \approx
    \tau(t) \in \datalogmodel[n]}$ and ${\tau(t) \in \aux{\datalogmodel[n]}}$.
    By Lemma \ref{lemma:eq}, we have ${\tau(s) = \tau(t)}$. Thus, ${\pi(s) =
    \pi(t)}$ and ${\pi(s) \approx \pi(t) \in \lpmodel}$, as required.

    \item Assume that ${v \not\in V_\mathsf{aux}}$. Then, we have ${\tau(t)
    \not\in \aux{\datalogmodel}}$ and, by Lemma \ref{lemma:dat-embed}, we have
    ${\pi(s) \approx \pi(t) \in \lpmodel}$.
\end{itemize}

\inductivestep Let ${v \in V_\mathsf{aux}}$ be an arbitrary vertex, let ${s, t
\in \terms{q}}$ be arbitrary terms such that ${\gamma(s) = v = \gamma(t)}$,
and assume that $\tau(v)$ is of the form $o_{R,A}$. By the definition of
$\gamma$, we have that ${s \sim t}$. Since ${\spurious{q}{\tau}{\dat(\K)} =
\false}$, we have ${\tau(s) \approx \tau(t) \in \datalogmodel}$. Since ${v \in
V_\mathsf{aux}}$, we have ${\setof{\tau(v), \tau(s), \tau(t)} \subseteq
\aux{\datalogmodel}}$. Then, by the construction of \datalogmodel, some ${n
\in \nat}$ exists such that ${\setof{\tau(v) \approx \tau(s),\, \tau(s)
\approx \tau(t)} \subseteq \datalogmodel[n]}$ and ${\setof{\tau(s), \tau(t)}
\subseteq \aux{\datalogmodel[n]}}$. By Lemma \ref{lemma:eq}, we have ${\tau(v)
= \tau(s)}$ and ${\tau(s) = \tau(t)}$. Now let $v'$ be the unique vertex of
$G_q$ such that ${\tuple{v',v} \in E}$. By definition, ${\pi(s) =
f_{R,A}(\pi(v')) = \pi(t)}$. Also, by the definition of $\delta$, we have
${\delta(f_{R,A}(\pi(v'))) = o_{R,A}}$, so property M1 holds. By the
reflexivity of $\approx$, we have ${\pi(s) \approx \pi(t) \in \lpmodel}$, and
so property M2 holds, as required.
\end{proof}

We finally prove the soundness of our approach.

\begin{lemma}[Soundness]
    Let \lpmodel and \datalogmodel be the minimal Herbrand models of $\Xi(\K)$
    and $\dat(\K)$, respectively; let ${q = \exists \vec{y}.
    \psi(\vec{x},\vec{y})}$ be an arbitrary CQ; and let $\tau$ be an arbitrary
    substitution such that ${\tau(q) \subseteq \datalogmodel}$ and
    ${\spurious{q}{\tau}{\dat(\K)} = \false}$. Then,
    ${\sproj{\tau}{\vec{x}}(q) \in \lpmodel}$.
\end{lemma}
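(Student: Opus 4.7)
The plan is to use the substitution $\pi$ built in Lemma \ref{lemma:pi} as the witness for $\sproj{\tau}{\vec{x}}(q) \in \lpmodel$, by showing first that $\pi$ agrees with $\tau$ on answer variables and second that $\pi$ maps every atom of $q$ into $\lpmodel$. The construction of $\pi$ is well-defined by Lemmas \ref{lemma:forest} and \ref{lemma:pi}, so the real content is to verify these two properties.

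For the agreement on answer variables, fix ${x \in \vec{x}}$. Condition \eqref{spur:cond1} of $\isSpurious$ forces ${\tau(x) \in \indnames}$, hence ${\gamma(x) \not\in V_\mathsf{aux}}$, so ${\gamma(x)}$ is a root of $G_q$. The base case of $\pi$ plus property 5 of Lemma \ref{lemma:dat-embed} yield some ${w \in \restricteddomain{\lpmodel}}$ with ${\delta(w) = \tau(x) \in \indnames}$; since $\delta$ maps individuals to themselves and functional terms to auxiliary individuals outside $\indnames$, we must have ${w = \tau(x)}$, so ${\pi(x) = \tau(x)}$.

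For the atom check, consider any atom of $q$. If it is unary ${A(t)}$, then ${A(\tau(t)) \in \datalogmodel}$; property M1 gives ${\delta(\pi(t)) = \tau(t)}$, and since ${\pi(t) \in \restricteddomain{\lpmodel}}$ (by construction of $\pi$ or by the binary case below), property 1 of Lemma \ref{lemma:dat-embed} yields ${A(\pi(t)) \in \lpmodel}$. If it is binary ${R(s,t)}$, then ${R(\tau(s),\tau(t)) \in \datalogmodel}$ and M1 gives ${\delta(\pi(s)) = \tau(s)}$ and ${\delta(\pi(t)) = \tau(t)}$. When ${\tau(t) \not\in \aux{\datalogmodel}}$, property 2 of Lemma \ref{lemma:dat-embed} directly yields ${R(\pi(s),\pi(t)) \in \lpmodel}$. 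When ${\tau(t) \in \aux{\datalogmodel}}$, we have ${\gamma(t) \in V_\mathsf{aux}}$ and the edge ${\tuple{\gamma(s),\gamma(t)} \in E_q}$; the forest structure (Lemma \ref{lemma:forest}) identifies this as the unique in-edge of $\gamma(t)$, so by the inductive step of $\pi$, ${\pi(t) = f_{R',A'}(\pi(\gamma(s)))}$ for the $R',A'$ determined by ${\tau(\gamma(t)) = o_{R',A'}}$.

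The hardest step is closing this last subcase: property 3 of Lemma \ref{lemma:dat-embed} gives ${R(\pi(s), f_{R',A'}(\pi(s))) \in \lpmodel}$, but we need ${R(\pi(s), \pi(t)) \in \lpmodel}$, and ${\pi(t)}$ is built from ${\pi(\gamma(s))}$ rather than ${\pi(s)}$. The bridge is property M2: since ${\gamma(\gamma(s)) = \gamma(s)}$, we have ${\pi(s) \approx \pi(\gamma(s)) \in \lpmodel}$; the functional reflexivity clause then gives ${f_{R',A'}(\pi(s)) \approx f_{R',A'}(\pi(\gamma(s))) = \pi(t) \in \lpmodel}$, and replacement in the role atom yields ${R(\pi(s),\pi(t)) \in \lpmodel}$, as needed. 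A small side-check is that ${\tau(\gamma(t)) = \tau(t)}$ so that the superscripts match: this follows because ${\gamma(t) \sim t}$, condition \eqref{spur:cond2} gives ${\tau(\gamma(t)) \approx \tau(t) \in \datalogmodel}$, and Lemma \ref{lemma:eq} collapses equalities into $\aux{\datalogmodel}$. Together these three checks exhibit $\pi$ as the required extension of $\sproj{\tau}{\vec{x}}$ that maps $q$ into $\lpmodel$.
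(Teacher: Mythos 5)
Your proposal is correct and takes essentially the same approach as the paper: it builds on the substitution $\pi$ defined before Lemma \ref{lemma:pi}, uses property M1 together with Lemma \ref{lemma:dat-embed} for unary atoms and for binary atoms with ${\tau(t) \not\in \aux{\datalogmodel}}$, and closes the auxiliary case with property 3 of Lemma \ref{lemma:dat-embed} plus M2 and congruence reasoning. The only differences are cosmetic---the paper swaps the first argument (moving $\tau(t')$ to $\tau(v')$ in $\datalogmodel$ and then using M2 in $\lpmodel$), whereas you swap the second argument via functional reflexivity and replacement, and you spell out the answer-variable agreement and the identity ${\tau(\gamma(t)) = \tau(t)}$, which the paper leaves implicit.
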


\begin{proof}
For $q$ and $\tau$ as specified in the lemma, let $\pi$ be the substitution
defined as specified just before Lemma \ref{lemma:pi}, and assume that
${\spurious{q}{\dat(\K)}{\tau} = \false}$. By definition, we have
${\sproj{\pi}{\vec{x}} = \sproj{\tau}{\vec{x}}}$. We next show that ${\pi(q)
\subseteq \lpmodel}$.

First, let $A(t)$ be an arbitrary unary atom of $q$, we show that ${A(\pi(t))
\in \lpmodel}$. By assumption, we have ${A(\tau(t)) \in \datalogmodel}$. By
Lemma \ref{lemma:dat-embed}, for each term ${w \in
\restricteddomain{\lpmodel}}$ such that ${\delta(w) = \tau(t)}$, we have that
${A(w) \in \lpmodel}$. By property M1 of Lemma \ref{lemma:pi}, we have
${A(\pi(t)) \in \lpmodel}$.

\smallskip

Second, let $R(t',t)$ be an arbitrary atom of $q$; we show that
${R(\pi(t'),\pi(t)) \in \lpmodel}$. By assumption, we have
${R(\tau(t'),\tau(t)) \in \datalogmodel}$. We distinguish the following two
cases.
\begin{enumerate}
    \item Assume that ${\tau(t) \not\in \aux{\datalogmodel}}$. By Lemma
    \ref{lemma:dat-embed}, for all terms ${w',w \in
    \restricteddomain{\lpmodel}}$ such that ${\delta(w') = \tau(t')}$ and
    ${\delta(w) = \tau(t)}$, we have ${R(w',w) \in \lpmodel}$. By property M1
    of Lemma \ref{lemma:pi}, we have ${R(\pi(t'),\pi(t)) \in \lpmodel}$.

    \item Assume that ${\tau(t) \in \aux{\datalogmodel}}$, and assume that
    $\tau(t)$ is of the form $o_{R,A}$. Furthermore, let $v'$ be the unique
    vertex of $G_q$ such that ${\tuple{v',\gamma(t)} \in E_q}$ and
    ${\gamma(t') = v'}$. By the definition of $\pi$, we have ${\pi(t) =
    f_{R,A}(\pi(v'))}$. Since ${\spurious{q}{\dat(\K)}{\tau} = \false}$, we
    have ${\tau(v') \approx \tau(t') \in \datalogmodel}$. Since $\approx$ is a
    congruence relation, we have ${R(\tau(v'),\tau(t)) \in \datalogmodel}$. By
    Lemma \ref{lemma:dat-embed}, for each term ${w' \in
    \restricteddomain{\lpmodel}}$ such that ${\delta(w') = \tau(v')}$, we have
    ${R(w',f_{R,A}(w')) \in \lpmodel}$. By property M1 of Lemma
    \ref{lemma:pi}, we have ${R(\pi(v'),f_{R,A}(\pi(v'))) \in \lpmodel}$, and
    by Property M2 of Lemma \ref{lemma:pi}, we have ${\pi(t') \approx \pi(v')
    \in \lpmodel}$. Therefore, we have ${R(\pi(t'),f_{R,A}(\pi(v'))) \in
    \lpmodel}$. \qedhere
\end{enumerate}
\end{proof}

\subsection{Proof of Theorem \ref{th:upperbound}}

\thupperbound*

\begin{proof}
First, we argue that we can compute relation $\sim$ in polynomial time. For
each term $u$, we can decide whether ${u \in \aux{\dat(\K)}}$ by checking
whether, for each term $u'$, we have that ${\dat(\K) \models u \approx u'}$
implies ${u' \not\in \indnames}$. Since the number of variables occurring in
each clause in \dat(\K) is bounded by a constant, this check can be performed
in polynomial time. Thus, we can evaluate in polynomial time the precondition
of the $(\mathsf{fork})$ rule. In addition, the size of relation $\sim$ is
bounded by $|\terms{q}|^2$, the rules used to compute it are monotonic, and
each inference can be applied in polynomial time, so we can compute $\sim$ in
polynomial time.

Second, we show that we can decide whether $q$ is aux-cyclic w.r.t.\ $\tau$ in
polynomial time. Since $\sim$ can be computed in polynomial time and the size
of $G_\mathsf{aux}$ is polynomially bounded by the number of terms occurring
in $q$, we can compute $G_\mathsf{aux}$ in polynomial time. Also, we can check
whether $G_\mathsf{aux}$ is a acyclic by searching for a topological ordering
of its vertexes in linear time \cite{DBLP:books/daglib/0023376}.

For the \np upper bound, according to Theorem \ref{th:main} checking whether
${\K \models \pi(q)}$ amounts to guessing a candidate answer $\tau$ for $q$ in
the minimal Herbrand model of \dat(\K) such that ${\sproj{\tau}{\vec{x}} =
\pi}$ and to checking that ${\spurious{q}{\dat(\K)}{\tau} = \false}$. Since
each clause in \dat(\K) has a bounded number of variables, the minimal
Herbrand model of \dat(\K) can be computed in polynomial time. By the first
two observations, we conclude that the whole process can be carried out in
nondeterministic polynomial time in the combined size of $\dat(\K)$ and $q$.

For the \ptime upper bound, consider a fixed \elho TBox \T and a fixed
conjunctive query $q$. For an arbitrary ABox \A, we can enumerate in
polynomial time all possible answers to $q$ in the minimal Herbrand model of
${\dat(\T) \cup \A}$. Also, we can filter out those answers that are spurious
in polynomial time. Finally, we just check whether $\pi$ occurs in the
remaining (certain) answers.
\end{proof}

}{}

\end{document}